\newtheorem{prop}{Proposition}
\theoremstyle{plain}
\theoremstyle{definition}
\theoremstyle{remark}
\icmltitlerunning{How to Train Your Wide Neural Network Without Backprop}
\begin{document}

\twocolumn[
\icmltitle{How to Train Your Wide Neural Network Without Backprop: An Input-Weight Alignment Perspective}

% It is OKAY to include author information, even for blind
% submissions: the style file will automatically remove it for you
% unless you've provided the [accepted] option to the icml2022
% package.

% List of affiliations: The first argument should be a (short)
% identifier you will use later to specify author affiliations
% Academic affiliations should list Department, University, City, Region, Country
% Industry affiliations should list Company, City, Region, Country

% You can specify symbols, otherwise they are numbered in order.
% Ideally, you should not use this facility. Affiliations will be numbered
% in order of appearance and this is the preferred way.
\icmlsetsymbol{equal}{*}

\begin{icmlauthorlist}
\icmlauthor{Akhilan Boopathy}{MIT}
\icmlauthor{Ila Fiete}{MIT}
%\icmlauthor{}{sch}
%\icmlauthor{}{sch}
\end{icmlauthorlist}

\icmlaffiliation{MIT}{Massachusetts Institute of Technology}

\icmlcorrespondingauthor{Akhilan Boopathy}{akhilan@mit.edu}

% You may provide any keywords that you
% find helpful for describing your paper; these are used to populate
% the "keywords" metadata in the PDF but will not be shown in the document
\icmlkeywords{Machine Learning, ICML}

\vskip 0.3in
]

% this must go after the closing bracket ] following \twocolumn[ ...

% This command actually creates the footnote in the first column
% listing the affiliations and the copyright notice.
% The command takes one argument, which is text to display at the start of the footnote.
% The \icmlEqualContribution command is standard text for equal contribution.
% Remove it (just {}) if you do not need this facility.

\printAffiliationsAndNotice{}  % leave blank if no need to mention equal contribution
%\printAffiliationsAndNotice{\icmlEqualContribution} % otherwise use the standard text.

\begin{abstract}
Recent works have examined theoretical and empirical properties of wide neural networks trained in the Neural Tangent Kernel (NTK) regime. Given that biological neural networks are much wider than their artificial counterparts, we consider NTK regime wide neural networks as a possible model of biological neural networks. Leveraging NTK theory, we show theoretically that gradient descent drives layerwise weight updates that are aligned with their input activity correlations weighted by error, and demonstrate empirically that the result also holds in finite-width wide networks. The alignment result allows us to formulate a family of biologically-motivated, backpropagation-free learning rules that are theoretically equivalent to backpropagation in infinite-width networks. We test these learning rules on benchmark problems in feedforward and recurrent neural networks and demonstrate, in wide networks, comparable performance to backpropagation. The proposed rules are particularly effective in low data regimes, which are common in biological learning settings.
\end{abstract}

\section{Introduction}
Deep neural networks trained with gradient descent are surprisingly successful in solving a variety of tasks and exhibit a moderate degree of  generalization~\cite{zhang2017understanding} and transferability~\cite{yosinski2014transferable}. However, it has remained theoretically difficult to understand what aspects of the training data the networks use -- and how -- to achieve their success. 
Recent theoretical work has established a number of convergence and generalization results for wide networks under the Neural Tangent Kernel (NTK) training regime because its analytical tractability  ~\cite{jacot2018NTK, arora2019exact, lee2019WideNN, lee2020finite}. Wide neural networks are also important models of \textit{biological neural networks}. Indeed, the evidence seems to suggest that neural circuits in the brain are substantially wider and shallower than current deep learning models~\cite{colonnier1981number}. Thus, wide (artificial) neural networks may more accurately model the brain. 

% XXX Akhilan: could this text below be put into a footnote or placed later in the intro, so that it is possible to quickly get to what you show (the topic of the paragraph right after)?

\iffalse
{Within wide neural networks, there are many options for width-dependent scaling of initialization, learning rate and parameterization, each of which lead to different training behaviors~\cite{yang2020feature}. We focus our theoretical derivations on networks trained under the NTK regime; as other wide neural network scaling limits become more analytically tractable, they could be considered by future works as alternative models of the brain. Our empirical work considers finite-width networks under standard parametrization.}
\fi

In this work, we leverage the tractability of the NTK regime of wide neural networks together with the suitability of wide neural networks as a model of biological systems to make progress in two directions: First, we investigate the learned representations of wide neural networks through the perspective of \textit{alignment} between the weights of a neural network and statistics of input data. Prior work has shown such alignment effects in the specialized settings of deep linear networks~\cite{saxe2019mathematical} and training with random labels~\cite{maennel2020what}; we extend these works by investigating alignment in nonlinear, wide neural networks. Our analysis indicates that even though neural network functions and training dynamics are complex and nonlinear, under an NTK training regime, intermediate layer weights of neural networks trained with gradient descent collect simple layerwise statistics of their inputs, weighted by errors on the output.

Second, the locality of the weights' dependence on the data propagated through the network prompts us to consider whether wide neural networks can be trained with simplified learning rules that reproduce the alignment effect without using backpropagation, the most common implementation of gradient descent in neural networks. Backpropagation is difficult to implement biologically,  and alternative learning rules may yield better understanding of how the brain learns~\cite{Seung03,Fiete06,bengio2016stdpvae, lillicrap2016random, liao2016important, nokland2016direct, bellec2020solution, bartunov2018assessing, richards2019deep, Roth19}. Biologically-plausible learning rules also can have practical advantages including easier implementation on neuromorphic hardware and reduced computational cost. In this work, we focus on data efficiency, which is practically relevant to many low data, real-world tasks. This advantage is also highly relevant to modeling learning in the brain; massive amounts of training data are often unavailable in biological learning settings.

In this paper, we make the following specific contributions\footnote{Our code is released at: \url{https://github.com/FieteLab/Wide-Network-Alignment}}:
\begin{itemize}
    \item We show theoretically that gradient descent on infinite-width neural networks in the NTK regime produces weights that are aligned with network layers in the following sense: the correlation matrix of the weight change at a particular layer is equal to an \textit{error-weighted} correlation matrix of the layer values at the same layer.
    \item We propose an \textit{alignment score} to quantify alignment in finite-width networks and empirically demonstrate input-weight alignment in wide, finite-width neural networks trained in the NTK regime.
    \item We develop a family of backpropagation-free learning rules that are designed to reproduce the input-weight alignment effect. Under the NTK training regime, we prove the equivalence between these learning rules and gradient descent in the infinite-width limit.
    \item We empirically show that the learning rules achieve comparable performance to gradient descent on wide convolutional neural networks (CNNs) and wide recurrent neural networks (RNNs) in the NTK training regime. The proposed learning rules outperform other backprop-free baseline learning rules including feedback alignment (FA)~\cite{lillicrap2016random} and direct feedback alignment (DFA)~\cite{nokland2016direct}. Moreover, the proposed Align learning rules are especially effective in low data settings, with Align-ada \textit{outperforming} gradient descent in very low data settings.
\end{itemize}

\section{Related Work}

% Task-transferability focused version:
%The intermediate layer features of a deep neural networks trained with gradient descent are organized according to their transferability to unseen tasks: early layers are \textit{task-general} while later layers are \textit{task-specific}~\cite{yosinski2014transferable}. This property has been used in multi-task learning to train feature extractors that are applicable to multiple tasks~\cite{collobert2009unified, zhang2014facial}. More recently, \cite{saxe2019mathematical} demonstrate that in deep linear networks trained with gradient descent, a hierarchical feature representation emerges where inputs belonging to the same semantic category have similar representations, suggesting the features may be applicable to multiple tasks. Moreover, \cite{maennel2020what} demonstrate alignment between early layer parameters of a neural network and unsupervised statistics of the input data. These results suggest that studying the statistics of the inputs captured by intermediate layer features may shed light on feature transferability.

\paragraph{Wide neural networks.} Recently, randomly initialized infinite-width deep neural networks have been shown to be equivalent to Gaussian processes~\cite{matthews2018gaussian, lee2018deep}. Moreover, gradient descent on infinite-width networks can be viewed as a kernel method: gradient descent on infinite width networks is equivalent to kernel regression under the Neural Tangent Kernel (NTK)~\cite{jacot2018NTK}. This insight has been used to train neural networks exactly in the infinite limit~\cite{arora2019exact}. NTK theory has also been used to show theoretically and empirically that wide \textit{finite-width} neural networks evolve as \textit{linearized} models in terms of their parameters~\cite{lee2019WideNN}. \cite{lee2020finite} further shows that the correspondence between finite-width wide networks and infinite width networks is strongest at small learning rates among other results. Networks trained in the NTK regime are also practically effective on low data tasks~\cite{arora2020harnessing}.

We highlight that many results in the above works are only applicable under a specific \textit{neural tangent} parameterization, or equivalently under standard parameterization under a specific choice of width-dependent learning rate~\cite{lee2019WideNN}, which leads to the NTK infinite-width limit. Recent work has shown that there are an infinite number of infinite-width limits, with certain limits, including the NTK limit, exhibiting an equivalence to kernel regression and other limits allowing for \textit{feature learning} in the infinite-limit~\cite{yang2020feature}. In this work we focus on the NTK limit as it is the most studied and has the most analytical tools available. We leave an extension of our results to other infinite-width limits as future work to be explored.
\paragraph{Biologically-plausible learning.}
In an effort to develop accurate models of learning in the brain, researchers have developed a number of biologically-plausible learning algorithms for neural networks. These algorithms avoid some of the biologically implausible aspects of backpropagation, the standard algorithm to train artificial neural networks. These aspects include 1) the fact that forward propagation weights must equal backward propagation weights throughout the training process, known as the weight transport problem~\cite{lillicrap2016random}, 2) the asymmetry of a nonlinear forward propagation computation and a linear backward propagation computation~\cite{bengio2016stdpvae} and 3) the alternation of forward and backward passes through the network and the computation of nonlinear activation function derivatives during the backward pass~\cite{nokland2016direct}.

Sign symmetry~\cite{liao2016important, xiao2018biologicallyplausible} and feedback alignment (FA)~\cite{lillicrap2016random, song2021convergence} address weight transport by showing that backward weights need not equal the corresponding forward weights for effective learning. Specifically, sign symmetry imposes \textit{sign-concordance} between the forward and backward weights, relaxing the need for continual transport of the exact forward weights to the backward weights. Removing the need for a biological mechanism to continuously synchronize two different synaptic weights is arguably a major step towards biological plausibility. As we will show, our proposed methods, similar to sign symmetry, avoid continual weight transport
but require a correspondence between forward and backward weights at initialization. Building on FA, Direct feedback alignment (DFA) further shows the effectiveness of passing feedback directly to intermediate layers, alleviating the need for a sequential layerwise backward pass~\cite{nokland2016direct,launay2020direct, refinetti2021align}. Greedy layerwise learning takes a different approach and trains different layers of a neural networks independently, avoiding global feedback altogether~\cite{nokland2019training}. Many other learning rules \textit{learn} a feedback pathway, including target propagation-based algorithms~\cite{lee2015difference,meulemans2020theoretical,ororbia2018conducting,ororbia2019biologically,ororbia2020continual,ororbia2020largescale,manchev2020target}, weight mirroring~\cite{akrout2019deep}, predictive coding~\cite{millidge2020predictive,ororbia2022neural}, and eligibility trace-based algorithms for RNNs~\cite{Roth19,marschall2020unified,williams1989learning,bellec2020solution}. Some of these biologically-motivated algorithms can have a number of practical advantages relative to backpropagation including 1) asynchronous updating of weights at different layers of a network, 2) reduced memory costs from having to store intermediate layer activation values, 3) reduced synaptic wiring in the feedback path. The resulting computational efficiencies can be particularly great on neuromorphic hardware, where forward and backward network weights are represented by physically separate wiring on a VLSI circuit.

Currently proposed biologically-motivated algorithms often require tuning feedback weights to be aligned with forward weights to achieve good performance; algorithms entirely avoiding such weight transport have difficulties scaling to larger-scale tasks~\cite{bartunov2018assessing}. In this paper, we propose learning rules that avoid any tuning of feedback weights. However, unlike previous works we prove an equivalence between our learning rules and backpropagation in the limit of infinite-width networks. We believe this provides a strong theoretical reason to expect our learning rules to be as scalable as gradient descent given sufficiently wide networks.
\section{Alignment Between Layers and Weights}
In this section, we introduce the concept of alignment between the layers and weights of a neural network. We then prove that this alignment occurs in infinite-width networks, showing that weights in infinite width networks capture layerwise statistics of input data. Furthermore, we introduce an \textit{alignment score} metric to quantify alignment in finite width networks. Using this metric, we empirically demonstrate alignment between layers and weights in finite-width networks.  
\subsection{Input-Weight Alignment} We consider an $N$-layer neural network $f(x)$ with input $x$. We denote the activation function as $\sigma$ and weights and biases $W_l$ and $b_l$ at layer $l$. Throughout this paper, we use the neural tangent parameterization of weights~\cite{jacot2018NTK}. We denote the pre-activation layer values at layer $l$ as functions $z_l(x)$ of input $x$. The layer pre-activations are defined as:
\begin{equation} \label{eqn: ntk_param}
    z_{l}(x)  = \frac{1}{\sqrt{m_{l-1}}} W_l \sigma(z_{l-1}(x)) + b_l 
\end{equation}
for $l = 1, ... N$ where $f(x) = z_N(x)$ and $m_{l-1}$ is the dimensionality of $z_{l-1}(x)$. For notational convenience we denote $\sigma(z_{0}(x)) = x$. Next, following the settings of~\cite{jacot2018NTK, arora2019exact, lee2019WideNN}, we assume that the network parameters are trained with continuous time gradient flow of learning rate $\eta$ on a supervised learning task with loss function $\mathbb{E}_{p_x}[L(f(x), y(x))]$ where $y(x)$ are targets and $p_x$ denotes the finite-supported training distribution over inputs $x$. We denote the parameter values at time $t$ of training with superscript $^{(t)}$: layer $l$ weights and biases at training time $t$ are denoted $W_l^{(t)}$ and $b_l^{(t)}$. Similarly, we denote the overall network function and intermediate-layer pre-activations at time $t$ as $f^{(t)}(x)$ and $z_l^{(t)}(x)$ respectively.

To define the concept of alignment, it is convenient to define a layerwise \textit{weight-change correlation matrix} $\Delta^{(t)}_l$:
\begin{equation}
    \Delta^{(t)}_l = (W_l^{(t)}-W_l^{(0)})^T (W_l^{(t)}-W_l^{(0)}).
\end{equation}
This matrix represents the correlation between different \textit{columns} of the weight change since initialization, $W_l^{(t)}-W_l^{(0)}$. This matrix captures all the information about $W_l^{(t)}-W_l^{(0)}$ up to rotations of the columns since if $W_l^{(t)}-W_l^{(0)}$ has singular value decomposition $U S V^T$, then $\Delta^{(t)}_l = V S^T S V^T$. Intuitively, the elements of $\Delta^{(t)}_l$ represent the similarity between post-activation units of layer $l-1$ in terms of how they impact the next layer. We focus on weight change correlation instead of weight correlation $W_l^{(t)T} W_l^{(t)}$ because in the NTK training regime, the scale of weight initializations is larger than weight updates~\cite{jacot2018NTK}. Thus, the weight correlation 
 does not significantly change over the course of training, and it is more relevant to consider the weight change correlation.

We also define a layerwise \textit{weighted input activity correlation matrix} $\Sigma^{(t)}_{l,q}$ as the correlation between post-activations under a particular pairwise weighting of inputs $q(x_1, x_2)$:
\begin{equation}
    \Sigma^{(t)}_{l,q} \\ = \mathbb{E}_{x_1\sim p_x, x_2\sim p_x}[ \sigma(z_{l-1}^{(t)}(x_1)) q(x_1, x_2) \sigma(z_{l-1}^{(t)}(x_2))^T]
\end{equation}
Finally, we say that activities and weights are aligned for a weighting $q$ when the activity correlation matrix is proportional to the weight change correlation matrix:
\begin{equation}
    \Sigma^{(t)}_{l,q} = k \Delta^{(t)}_l
\end{equation}
for some positive constant $k$. When the weights are aligned with input activities, the weights can be fully specified as a function of the corresponding layer's input correlation matrix up to rotation. In other words, alignment implies that weights capture layer-local statistics of the input data.

Note that our notion of alignment is stronger than the one proposed in~\cite{maennel2020what} since we not only require the eigenvectors of the weight change correlation matrix and input activity correlation matrix to be equal, but also require the corresponding eigenvalues to be proportional to each other. We also note the following additional differences with~\cite{maennel2020what}: 1) we use the weight change correlation matrix instead of the weight correlation matrix, 2) we consider the input correlation matrix instead of the input \textit{covariance} matrix which normalizes layer activations to have mean zero, 3) we compute input correlation matrices over a general pairwise weighting $q(x_1, x_2)$ instead of restricting the weighting to the delta function $q(x_1, x_2) = \delta(x_1, x_2)$.
\subsection{Alignment in infinite-width}
In the infinite-width limit, layerwise inputs and weights are aligned under a specific weighting $q_l^{(t)}(x_1, x_2)$ that depends only on the gradients of the network at initialization and an \textit{integrated-error} function $\delta^{(t)}(x) \in \mathbb{R}^{m_N}$: 
\begin{equation}
    \delta^{(t)}(x) = \eta \int_{0}^t \nabla_{z_N} L(f^{(\tau)}(x), y(x)) d \tau
\end{equation}
This quantity captures the averaged influence $x$ has on the function $f(x)$ over the course of training. Note that this quantity only depends on the output values of the network and effectively measures the average error incurred by the network outputs on point $x$ across the course of training. We also define a layerwise kernel $\Gamma_l(x_1, x_2) \in \mathbb{R}^{m_N \times m_N}$ as:
\begin{equation}
    \Gamma_l(x_1, x_2) = \nabla_{z_l} f^{(0)}(x_1)^T \nabla_{z_l} f^{(0)}(x_2)
\end{equation}
This quantifies the similarity in the gradients of $x_1$ and $x_2$ at layer $l$. It depends only on the initial configuration of the network. Finally, we define $q^{(t)}_l(x_1, x_2) \in \mathbb{R}$ as:
\begin{equation}
    q^{(t)}_l(x_1, x_2) = \delta^{(t)}(x_1)^T \Gamma_l(x_1, x_2) \delta^{(t)}(x_2)
\end{equation}
\begin{figure*}[htbp]
    \centering
    \begin{subfigure}{0.48\textwidth}
      \centering
      \includegraphics[width=\linewidth]{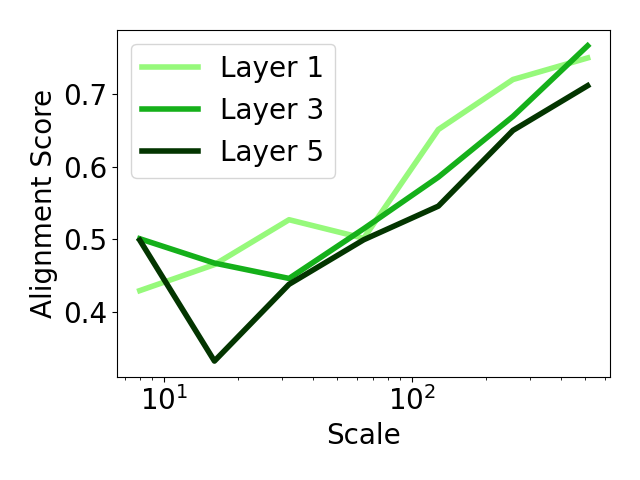}
      \caption{CIFAR-10}
      \label{fig:align_width_cifar}
    \end{subfigure}%
    \begin{subfigure}{0.48\textwidth}
      \centering
      \includegraphics[width=\linewidth]{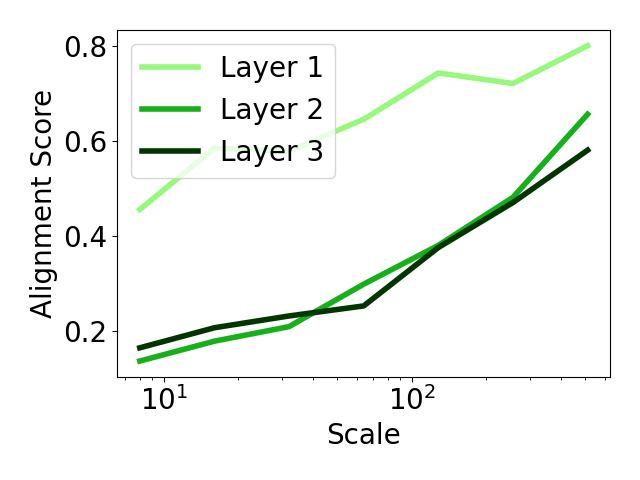}
      \caption{KMNIST}
      \label{fig:align_width_kmnist}
    \end{subfigure}
    \caption{Alignment scores at different layers of convolutional neural networks with different layer widths trained on the CIFAR-10 (8 layers) and KMNIST (4 layers). The scale of the architecture is denoted $\times n$, where $n$ is the number of filters in intermediate layers.}
    \label{fig:align_width}
\end{figure*}
Intuitively, pairs of points which influence the network $f(x)$ similarly will be weighted positively and points which have opposite influences on the network will be weighted negatively. Note that this weighting produces an input activity correlation $\Sigma^{(t)}_{l,q^{(t)}_l}$ whose only dependence on the targets $y(x)$ is through the integrated-error $\delta^{(t)}(x)$. Therefore, $\Sigma^{(t)}_{l,q^{(t)}_l}$ is an \textit{error-weighted} correlation matrix of the data. We can then show alignment between input activations and weights in the infinite-width limit: 
\begin{prop} \label{prop: inf_width_alignment}
Assume layers are randomly initialized according to the neural tangent initialization~\cite{jacot2018NTK}. Also, assume that the nonlinearity $\sigma(\cdot)$ has bounded first and second derivatives. Suppose the network is trained with continuous gradient flow with learning rate $\eta$ on the mean-squared error loss for $T$ time such that $\max_x ||y(x)-f^{(t)}(x)||_2$ is uniformly bounded in $t \in [0, T]$. Then, in the simultaneous limit $\lim_{m_{N-1}, m_{N-2}, ... m_1 \to \infty}$:
    \begin{equation}
        \frac{1}{\sqrt{m_{l-1}}} || \frac{1}{m_{l-1}} \Sigma^{(0)}_{l,q^{(t)}_l} -  \Delta^{(t)}_l ||_{op} \in \mathcal{O}(\frac{1}{\min \{ m_1, ... m_{N-1} \} })
    \end{equation}
\end{prop}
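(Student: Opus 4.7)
The plan is to compute the weight change $W_l^{(t)} - W_l^{(0)}$ directly from the gradient-flow equation, apply NTK-regime linearization to isolate the time dependence into the integrated error $\delta^{(t)}$, and then identify the resulting quadratic form with $\frac{1}{m_{l-1}}\Sigma_{l,q_l^{(t)}}^{(0)}$. Concretely, I would first write
\begin{equation*}
W_l^{(t)} - W_l^{(0)} = -\frac{\eta}{\sqrt{m_{l-1}}} \int_0^t \mathbb{E}_x\bigl[ \nabla_{z_l} f^{(\tau)}(x)\, \nabla_{z_N} L(f^{(\tau)}(x), y(x))\, \sigma(z_{l-1}^{(\tau)}(x))^T \bigr]\, d\tau,
\end{equation*}
which is the chain-rule expression for gradient flow on $W_l$ under the neural tangent parameterization. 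In the sequential infinite-width limit, the NTK regime freezes both the Jacobian $\nabla_{z_l} f^{(\tau)}$ and the pre-activation $z_{l-1}^{(\tau)}$ at their time-zero values uniformly in $\tau \in [0, T]$, so the remaining time integral $\eta\int_0^t \nabla_{z_N} L(f^{(\tau)}(x), y(x))\,d\tau$ collapses by definition to $\delta^{(t)}(x)$. This yields the rank-structured outer-product approximation $W_l^{(t)} - W_l^{(0)} \approx -\frac{1}{\sqrt{m_{l-1}}} \mathbb{E}_x[\nabla_{z_l} f^{(0)}(x)\,\delta^{(t)}(x)\,\sigma(z_{l-1}^{(0)}(x))^T]$.

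Forming $\Delta_l^{(t)} = (W_l^{(t)} - W_l^{(0)})^T (W_l^{(t)} - W_l^{(0)})$, the double integral over $(x_1, x_2)$ collapses naturally: the inner contraction of initial Jacobians yields exactly $\Gamma_l(x_1, x_2)$, sandwiching with the two $\delta^{(t)}$ vectors produces the scalar weighting $q_l^{(t)}(x_1, x_2)$, and the outer contraction of activations matches the integrand defining $\Sigma_{l, q_l^{(t)}}^{(0)}$. The $1/m_{l-1}$ prefactor on $\Sigma$ arises exactly from squaring the $1/\sqrt{m_{l-1}}$ in the weight-change formula.

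The main obstacle is turning this informal NTK linearization into a rigorous operator-norm bound on the quadratic form, using only the extra $1/\sqrt{m_{l-1}}$ slack built into the claim. Following the standard NTK machinery of~\cite{lee2019WideNN,jacot2018NTK}, I would use the assumed boundedness of $\sigma'$, $\sigma''$, and the uniform bound on $\|y - f^{(\tau)}\|_2$ to prove Gronwall-type estimates showing that $\nabla_{z_l} f^{(\tau)} - \nabla_{z_l} f^{(0)}$ and $z_{l-1}^{(\tau)} - z_{l-1}^{(0)}$ both vanish in the relevant norms uniformly in $\tau \in [0, T]$. The sequential order of limits (with $m_1 \to \infty$ taken first) is essential: it forces each inner layer to reach its deterministic NTK limit before the next outer limit is taken, supplying the concentration needed for the operator-norm estimates at layer $l$. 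The core technical step is bounding the cross terms produced when substituting the NTK approximations into \emph{both} copies of $(W_l^{(t)} - W_l^{(0)})$ inside the quadratic form, and showing the residual operator norm is $o(\sqrt{m_{l-1}})$ along the sequential limit.
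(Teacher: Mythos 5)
Your proposal follows essentially the same route as the paper's proof: the paper defines the residual $\tilde{W}_l^{(t)}$ between the true weight change and the frozen outer-product $\frac{\eta}{\sqrt{m_{l-1}}}\mathbb{E}_x[\nabla_{z_l}f^{(0)}(x)\,(\int_0^t(y-f^{(\tau)})d\tau)\,\sigma(z_{l-1}^{(0)}(x))^T]$, shows $\|\tilde{W}_l^{(t)}\|_{op}\to 0$ via an inductive/Gr\"onwall control of the drift in the Jacobians and activations (using Jacot et al.'s Theorem 2 and the bounded $\sigma'$, $\sigma''$), identifies $m_{l-1}^{-1}\Sigma^{(0)}_{l,q_l^{(t)}}$ with the quadratic form of the frozen part exactly as you describe, and closes by bounding the same cross terms you flag as the core technical step. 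The only difference is one of detail: the paper makes the Jacobian-drift estimate an explicit layerwise induction rather than leaving it to cited machinery.
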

\begin{proof}
    See Appendix~\ref{prop1_proof} for a proof.
\end{proof}
Proposition~\ref{prop: inf_width_alignment} equates the layerwise input correlation at time $0$ with $m_{l-1} \Delta_l^{(t)}$. Conceptually, as layer widths increase, the feedback Jacobian $\nabla_{z_l} f^{(t)}(x)$ and intermediate-layer activations change vanishingly little over the course of training, making the weight change correlation proportional to the input correlation. See Appendix~\ref{prop1_proof} for a discussion of the width dependent factors in the above expression. The interpretation of the result hinges on the nature of the pairwise weighting $q_l^{(t)}(x_1, x_2)$. As discussed, $q_l^{(t)}(x_1, x_2)$ intuitively captures the similarity in the errors from $x_1$ and $x_2$. Thus, Proposition~\ref{prop: inf_width_alignment} can be interpreted as saying that the weight changes of an infinite-width neural network reflect error-weighted correlation statistics of the inputs from corresponding intermediate layers.

\subsection{Alignment in finite-width}
Although Proposition~\ref{prop: inf_width_alignment} applies to infinite-width networks, we show that alignment approximately holds in wide finite-width networks. We empirically quantify alignment through an \textit{alignment score} that measures the similarity between the weight change correlation $\Delta^{(t)}_l$ and the time $0$ input correlation $\Sigma^{(0)}_{l,q^{(t)}_l}$. We wish
to use a matrix similarity measure that is scale invariant, while being maximized for proportional matrices. Cosine similarity is a natural and commonly used measure with these properties, which we use to define an alignment score $S$:
\begin{equation}
    S = tr\bigl(\Delta^{(t)}_l \Sigma^{(0)}_{l,q^{(t)}_l}\bigr)
    tr\bigl(\Delta^{(t)2}_l\bigr)^{-1/2} tr\bigl(\Sigma^{(0)2}_{l,q^{(t)}_l}\bigr)^{-1/2}
\end{equation}
This score is in the range $[-1, 1]$ with $1$ indicating perfect alignment. For wide neural networks, it may not be computationally efficient to operate directly on $\Delta^{(t)}_l$ and $\Sigma^{(0)}_{l,q^{(t)}_l}$ because their sizes scale quadratically with layer width. Instead, the score can be computed in terms of $\Delta^{(t)}_l z$ and $\Sigma^{(0)}_{l,q^{(t)}_l} z$ when $z$ is sampled from a unit Gaussian $N(0, I)$:
\begin{equation} \label{eqn: align_score}
    S = \mathbb{E}_{z}\bigl[z^T\Delta^{(t)}_l \Sigma^{(0)}_{l,q^{(t)}_l} z]{\biggl(\mathbb{E}_{z}\bigl[||\Delta^{(t)}_l z||_2^2\bigr] \mathbb{E}_{z}\bigl[ ||\Sigma^{(0)}_{l,q^{(t)}_l} z||_2^2\bigr]\biggr)^{-1/2}}
\end{equation}
$\Delta^{(t)}_l z$ and $\Sigma^{(0)}_{l,q^{(t)}_l} z$ are computed as a series of matrix-vector products:
\begin{equation}
    \Delta^{(t)}_l z = (W_l^{(t)}-W_l^{(0)})^T (W_l^{(t)}-W_l^{(0)}) z
\end{equation}
\begin{equation}
    \Sigma^{(0)}_{l,q^{(t)}_l} z = m_{l-1} (W_l^{*(t)}-W_l^{(0)})^T (W_l^{*(t)}-W_l^{(0)}) z
\end{equation}
where $W_l^{*(t)}$ is defined by the following dynamics, starting from initial condition $W_l^{*(0)} = {W}_l^{(0)}$:
\begin{multline} \label{lin_align_rule}
    \dot{W}_l^{*(t)} =  \frac{\eta}{\sqrt{m_{l-1}}} \times \\  \mathbb{E}_{p_x}\bigl[\nabla_{z_l} f^{(0)}(x) \nabla_{z_N} L(f^{(t)}(x), y(x)) \sigma(z_{l-1}^{(0)}(x))^T \bigr].
\end{multline}
${W}_l^{*(t)}$ is computed by simply training a network with the above learning rule. This alignment computation naturally yields an interpretation of the alignment score as a similarity between the parameters of a network and the corresponding network trained with initialized activations $\sigma(z_{l-1}^{(0)}(x))$ and initialized feedback Jacobian $\nabla_{z_l} f^{(0)}(x)$.  We further investigate variants of the learning rule of Equation~\eqref{lin_align_rule} in  Section~\ref{learning_rules}.

Next, we use the alignment scores to evaluate alignment in finite-width networks trained on CIFAR-10~\cite{krizhevsky2009cifar} and KMNIST~\cite{clanuwat2018kmnist}, as a function of network width. We train an 8 (CIFAR-10) or 4 (KMNIST) layer ReLU-activated CNN on the mean squared error loss and vary the number of convolutional filters in the intermediate layers from $8$ to $512$. See Section~\ref{experiments} for further architectural and hyperparameter details. In Figure~\ref{fig:align_width}, we find that alignment scores increase with network width, consistent with our prediction of perfect alignment in infinite width. Generally, alignments are lower in later layers of network, suggesting the infinite-width correspondence holds best in early layers. Nevertheless, the results illustrate that in sufficiently wide finite-width networks, intermediate layer weights indeed capture local statistics of the corresponding layers' inputs. See Appendix~\ref{app: tables} Tables~\ref{tab: alignment_metrics_cifar},~\ref{tab: alignment_metrics_kmnist} for full tabulated results for these networks. In Appendix~\ref{app: tables} Table~\ref{tab: alignment_metrics_cifar_tanh}, we also tabulate alignment scores for an 8 layer Tanh-activated CNN trained on CIFAR-10; we find qualitatively similar patterns indicating that the alignment effect empirically holds across activation functions. In Appendix~\ref{app: tables} Figure~\ref{fig:align_epoch}, we plot the evolution of alignment scores during training find that they are initially high and slowly decrease during training as parameters drift from initialization. We also plot baseline alignment scores in networks with randomly permuted weights after training and find low alignment, validating the statistical significance of our alignment scores.

\subsection{Discussion}
The alignment results demonstrate that the weights of a wide neural network trained by gradient descent can be described by \textit{local} layerwise activation statistics. This is surprising since in general settings, intermediate layer weights can encode \textit{global} information about a network; we show that in the wide network regime, this role of global information is limited. The alignment results directly link trained weights to activation statistics at their corresponding layers, providing greater interpretability into trained networks. To our knowledge, this result is the most general alignment result of its kind found so far. Moreover, as we will demonstrate next, it both conceptually and mathematically motivates simplified learning rules which rely less on coordinated global feedback than backpropagation.

\section{Simplified Training Rules Produce Alignment} \label{learning_rules}
Prompted by the observation that wide finite-width networks exhibit high input-weight alignment, we propose a family of simplified learning rules designed to produce the alignment effect. We show theoretically that the learning rules are equivalent to gradient descent in the infinite width limit. In Section~\ref{experiments}, we perform extensive experiments assessing the performance of these learning rules in finite-width networks.
\begin{figure}
    \centering
    \includegraphics[scale=0.49]{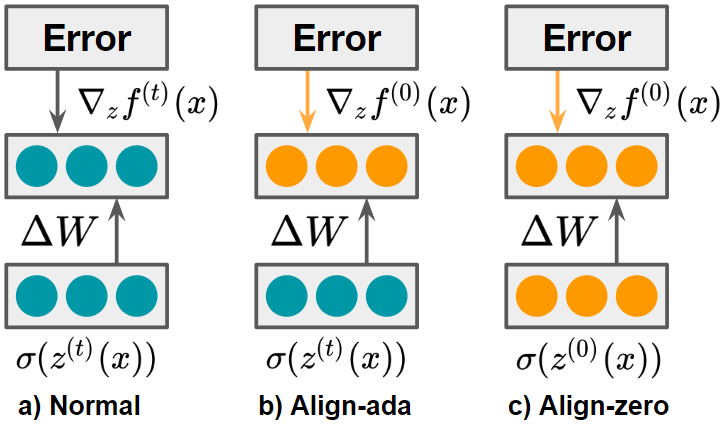}
    \caption{Visual comparison of weight updates $\Delta W$ in normal training (via gradient descent) and alignment-based training (Align-ada and Align-zero).}
    \label{fig:align_fig}
\end{figure}
\begin{figure*}
    \centering
    \begin{subfigure}{.49\textwidth}
      \centering
      \includegraphics[width=\linewidth]{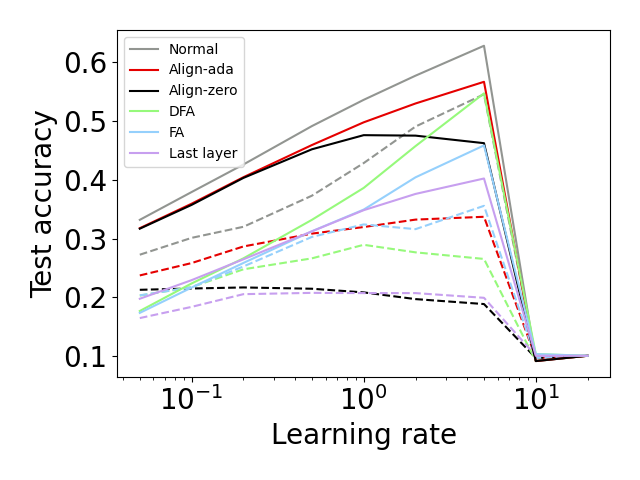}
      \caption{CIFAR-10}
      \label{fig:lr_tuning_cifar}
    \end{subfigure}%
    \begin{subfigure}{.49\textwidth}
      \centering
      \includegraphics[width=\linewidth]{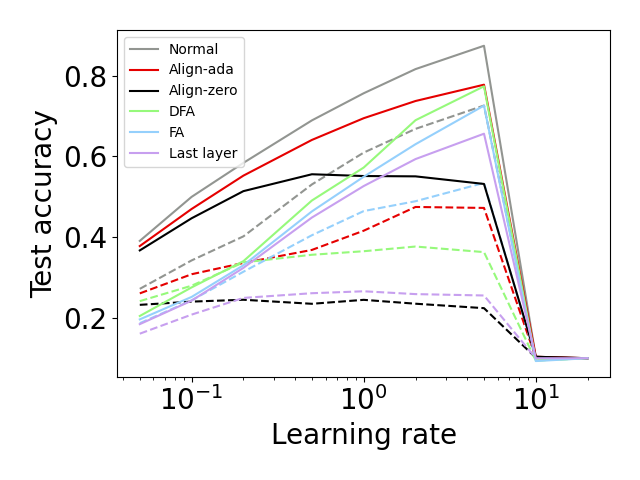}
      \caption{KMNIST}
      \label{fig:lr_tuning_kmnist}
    \end{subfigure}
    \caption{Test set accuracies of different learning rules vs. learning rate on CNNs trained on CIFAR-10 and KMNIST. Solid lines: $256$ conv. filters per layer; dashed: $8$ conv. filters.}
    \label{fig:lr_tuning}
\end{figure*}
Before considering simplified training rules, we examine the learning rule of gradient descent:
\begin{multline} \label{normal_rule}
    \dot{W}_l^{(t)} =  \frac{\eta}{\sqrt{m_{l-1}}} \times \\ \mathbb{E}_{p_x}[\nabla_{z_l} f^{(t)}(x) \nabla_{z_N} L\bigl(f^{(t)}(x), y(x)\bigr) \sigma(z_{l-1}^{(t)}(x))^T ]
\end{multline}
For concision, we omit the learning rule for biases which is found by removing the width scaling factor and $\sigma(z_{l-1}^{(t)}(x))^T$. In the infinite-width limit, we expect $\nabla_{z_l} f^{(t)}(x)$ and $\sigma(z_{l-1}^{(t)}(x))$ to remain unchanged over the course of training, which enables input-weight alignment. The simplified learning rules are found by setting both of these quantities to their values at initialization. Specifically, we find the \textit{Align-zero} learning rule by setting both quantities to their initialization values at time $0$:
\begin{multline} \label{lin_rule_full}
    \dot{W}_{l, al\cdot 0}^{(t)} =   \frac{\eta}{\sqrt{m_{l-1}}} \times \\ \mathbb{E}_{p_x}[\nabla_{z_l} f^{(0)}(x) \nabla_{z_N} L(f_{al\cdot 0}^{(t)}(x), y(x)) \sigma(z_{l-1}^{(0)}(x))^T]
\end{multline}
where $f_{al\cdot 0}^{(t)}$ represents the network with parameter values $W^{(t)}_{l,al \cdot 0}$. The difference between the rule of Equation~\eqref{lin_rule_full} and the earlier alignment computation rule of Equation~\eqref{lin_align_rule} is that the previous rule uses output gradients $\nabla_{z_N} L(f^{(t)}(x), y(x))$ of the original network $f(x)$ trained with gradient descent, while the above rule uses output gradients $\nabla_{z_N} L(f_{al\cdot 0}^{(t)}(x), y(x))$ of the network $f_{al\cdot 0}(x)$ trained with the above learning rule. Unlike the earlier rule, the above learning rule does not require an additional network $f(x)$ trained with gradient descent; it can be used by itself to compute $f_{al\cdot 0}^{(t)}(x)$. Note that Align-zero appears similar to the learning rule of~\cite{lee2019WideNN}; see Appendix~\ref{remark} for a discussion of the key differences.

% XXX here and in all the rest: might it be notationally clearer to use $\nabla_{z_N}$ instead of $\nabla_{f(x)}$? i know you give up designating which network it is (f, f_{al\cdot 0}, f_{al\cdot ada}), but maybe one can just say so just below each rule? and in eqs \eqref{normal_rule}, \eqref{lin_rule_full}, might it be notationally simpler to drop the x's (while keeping them in the normal gradient expression)?

%XXX notationally, consider replacing _{lin} with _{wlin} (and _{slin} with _{wslin} to emphasize that it's not f's dynamics that are linear but rather w's?

Recognizing that network activations might change slightly in wide but finite-width networks, we may alter the Align-zero rule to allow parameter learning to \textit{adapt} to the locally available values  $\sigma(z_{l-1}^{(t)}(x))$, instead of fixing them to their values at initialization. This results in the \textit{Align-ada} learning rule:
\begin{multline} \label{slin_rule_full}
    \dot{W}_{l, al\cdot ada}^{(t)} =   \frac{\eta}{\sqrt{m_{l-1}}} \times \\ \mathbb{E}_{p_x}[\nabla_{z_l} f^{(0)}(x) \nabla_{z_N} L(f_{al\cdot ada}^{(t)}(x), y(x)) \sigma(z_{l-1}^{(t)}(x))^T ]
\end{multline}
where $f_{al\cdot ada}(x)$ denotes the network with parameters ${W}_{l, al\cdot ada}^{(t)}$ trained by Align-ada. See Appendix~\ref{app: pseudocode}, Algorithms~\ref{algo: linearized} and~\ref{algo: semilinearized}, for pseudocode and full procedures for training with Align-zero and Align-ada; the difference between the two algorithms is highlighted. Also see Figure~\ref{fig:align_fig} for a visual comparison of Align-zero and Align-ada.

Both Align-zero and Align-ada avoid continual weight transport as they replace the backpropagation Jacobian $\nabla_{z_l} f^{(t)}(x)$ with a fixed feedback Jacobian $\nabla_{z_l} f^{(0)}(x)$. However, the rules still require computing the fixed-over-learning feedback function $\nabla_{z_l} f^{(0)}(x)$, the backpropagation Jacobian at time $0$. This may difficult to compute biologically; nevertheless, we believe that such a fixed feedback function would be significantly easier to explain biologically compared to backpropagation which requires constant synchronization or mirroring of the forward and feedback weights. Indeed, previously proposed biologically-motivated learning rules also avoid weight transport, but impose some similarity between forward and backward weights~\cite{liao2016important}.

Next, we show that Align-ada and Align-zero are \textit{equivalent} to gradient descent in the limit of infinite-width:
\begin{prop} \label{prop: inf_width_linearized}
    Suppose networks of the same initialization are trained with Align-ada, Align-zero and gradient descent under the setting of Proposition 1. Then, in the simultaneous limit $\lim_{m_{N-1}, m_{N-2}, ... m_1 \to \infty}$:
    \begin{multline}
        ||f^{(t)}(x) - f_{al\cdot 0}^{(t)}(x)||,\,\,\, ||f^{(t)}(x) - f_{al\cdot ada}^{(t)}(x)|| \\ \in \mathcal{O}(\frac{1}{\sqrt{\min \{ m_1, ... m_{N-1} \} }})
    \end{multline}
\end{prop}
\begin{proof}
    See Appendix~\ref{prop2_proof} for a proof.
\end{proof}
This result establishes that simplified, backprop-free learning rules designed to produce input-weight alignment are \textit{as effective as gradient descent in the infinite-width limit}. Moreover, given our observation that wide finite-width networks exhibit high input-weight alignment, Proposition~\ref{prop: inf_width_linearized} suggests that these learning rules might be effective loss-optimizing rules in finite-width networks. Finally, while we use the notation of feedforward networks, our analysis also applies to recurrent neural networks and our learning rules can be adapted to recurrent networks as shown in Appendix~\ref{app: rnn_impl}.

Although Proposition 2 shows that the specific methods Align-ada and Align-zero are equivalent to gradient descent in the infinite width limit, we believe our analysis is applicable to a large family of learning rules. As a specific example, in Appendix~\ref{app:alignprop}, we propose an additional Align variant named \textit{Align-prop} in which the approximation of the backpropagation Jacobian $\nabla_{z_l} f^{(t)}(x)$ is allowed to change over the course of training. Like Align-ada and Align-zero, Align-prop is equivalent to gradient descent in infinite width networks. The presence of multiple rules equivalent to gradient descent in infinite width networks suggests the possibility of many potentially biologically-plausible learning rules with strong theoretical guarantees in wide neural networks.

In the next section, we investigate the empirical effectiveness of Align methods.

\section{Experiments on Task Performance} \label{experiments}
In this section, we train finite-width networks using Align-ada and Align-zero and compare their performance to normal training by gradient descent and other backprop-free baseline learning rules. We assess the efficacy of Align-ada and Align-zero across different network widths and learning rates. See Appendix~\ref{app:alignprop} for experiments on Align-prop. We conduct experiments on CNNs trained on CIFAR-10~\cite{krizhevsky2009cifar}, Small CIFAR-10~\cite{arora2020harnessing}, KMNIST~\cite{clanuwat2018kmnist} and ImageNet~\cite{russakovsky2015Imagenet} and RNNs trained on an addition task. 
% XXX added this to manage expectations about numerical results: 
Note that our goal with these experiments is not to optimize performance to beat state-of-the-art results on the tasks but rather to demonstrate both the validity of the theoretical work for finite-width networks and to show that Align methods approach the performance of backpropagation while outperforming biologically-motivated baselines. Although we primarily consider the NTK training regime, in Appendix~\ref{app: std_param} we present results under standard parameterization. In Appendix~\ref{app:multiseed}, we conduct additional experiments over multiple random seeds to assess the statistical significance of our results. In Appendix~\ref{app:train_time}, we evaluate the training time of Align methods vs. backpropagation.
\begin{figure*}
    \centering
    \begin{subfigure}{.49\textwidth}
      \centering
      \includegraphics[width=\linewidth]{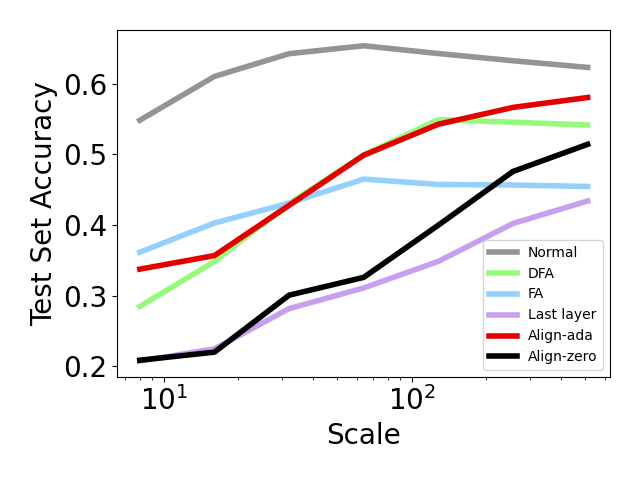}
      \caption{CIFAR-10}
      \label{fig:width_tuning_cifar}
    \end{subfigure}%
    \begin{subfigure}{.49\textwidth}
      \centering
      \includegraphics[width=\linewidth]{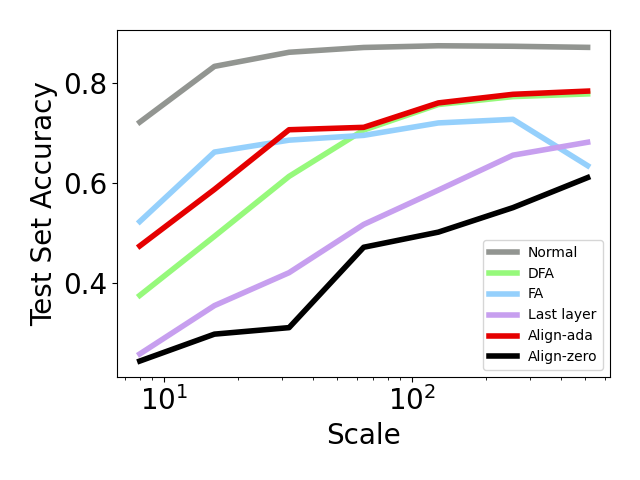}
      \caption{KMNIST}
      \label{fig:width_tuning_kmnist}
    \end{subfigure}
    \caption{Test set accuracies of different learning rules vs. network scale of CNNs trained on CIFAR-10 and KMNIST.}
    \label{fig:width_tuning}
\end{figure*}
\subsection{Results on Add task in RNNs}
We defer results on the Add task to Appendix~\ref{app: add_task}; in summary, we find that Align methods perform comparably to gradient descent in wide recurrent neural networks.

\subsection{Results on CIFAR-10 and KMNIST}
\paragraph{Dataset and baselines}
We compare various learning methods (gradient descent (denoted Normal), training only the last layer of the network (denoted Last layer), DFA~\cite{nokland2016direct}, FA~\cite{lillicrap2016random}) on CNNs with 7 (CIFAR-10) or 3 (KMNIST) convolutional layers followed by global average pooling and a final fully connected layer. We defer additional architecture and hyperparameter details and descriptions of baselines to Appendix~\ref{app: hyperparams}.

\paragraph{Varying learning rate} \label{exp: vary_lr}
In Figure~\ref{fig:lr_tuning}, we plot the performance of Align methods vs. baselines over a grid search of learning rates in $[0.05, 20]$ for a narrow and wide network. At small learning rates, the performance of Align-ada, Align-zero and Normal is very close on both CIFAR-10 and KMNIST. This may be because of higher input-weight alignment at lower learning rates, and therefore greater correspondence between the Align learning rules and backprop. The gap between the methods grows modestly with increased learning rates and, as theoretically expected, the gap grows faster in the narrow network. On the wide network, across the entire range of learning rates tested, Align-ada outperforms all non-backprop baselines. All methods have a sharp drop in performance at the same learning rate value of $10$, at which point training becomes unstable.
\begin{figure}
    \centering
    \includegraphics[width=0.99\linewidth]{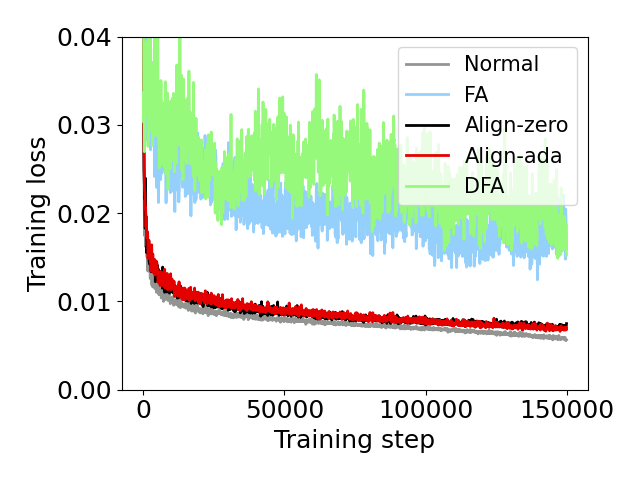}
    \caption{Training loss curves of different learning rules.}
    \label{fig:imagenet}
\end{figure}
\paragraph{Varying network width}
In Figure~\ref{fig:width_tuning}, we compare performance of different learning rules under architectures of different widths. While Normal performs well at all network widths, Align-ada performs nearly as well as gradient descent at large widths; Align-zero performs worse, but still significantly improves with network width. Align-zero's relatively worse performance suggests that layers $\sigma(z^{(t)}_{l-1}(x))$ vary considerably more over the course of training than gradients $\nabla_{z_l} f^{(t)}(x)$ in finite width networks. At the largest width tested, Align-ada matches or outperforms all baselines except Normal on both CIFAR-10 and KMNIST. See Appendix~\ref{app: tables} for full tabulated results.

\paragraph{Note on performance}
While our accuracies are generally low relative to standardly trained networks, they are in similar ranges to prior results for comparably sized networks trained with NTK-based linearized training methods~\cite{lee2019WideNN}. This validates that our wide networks indeed are in the NTK regime. It may be possible to improve performance by extending Align methods to non-NTK regimes via, for example, slow timescale weight transport of backward weights; we leave such extensions as a future work.
\begin{figure}
    \centering
      \includegraphics[width=0.99\linewidth]{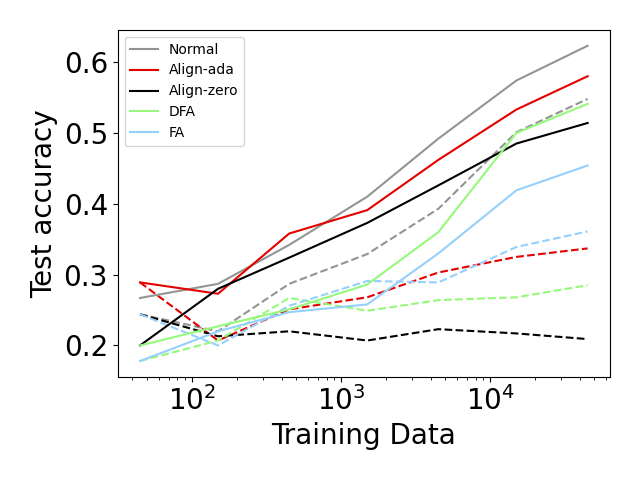}
    \caption{Test set accuracies of different learning rules vs. training set size on CNNs trained on Small CIFAR-10. Solid lines: 512 conv. filters per layer, dashed: 8 conv. filters.}
    \label{fig:data_tuning}
\end{figure}
\subsection{Results on ImageNet}
\paragraph{Dataset and baselines}
We train an 8 layer CNN on ImageNet, comparing Align-ada and Align-zero to Normal, FA and DFA. Additional details on the dataset, architecture, hyperparameters and baselines are included in Appendix~\ref{app: hyperparams}. We aim not to achieve competitive performance on ImageNet, but to illustrate that Align methods can closely match the training dynamics of normal training in the NTK regime even on challenging real-world datasets.
\paragraph{Performance during training}
In Figure~\ref{fig:imagenet}, we find that the training losses of Align-ada and Align-zero closely match that of Normal, with a small gap only emerging near $1.5\times 10^{5}$ training steps. By contrast, DFA and FA do not approach the performance of Normal. This suggests that Align methods may be significantly more scalable than other backprop-alternative learning rules.

\subsection{Results in the low-data regime: Small CIFAR-10}
\paragraph{Dataset and baselines}
We construct Small CIFAR-10 as a random subset of points from the full CIFAR-10 training set. Otherwise, we use the same settings as CIFAR-10.

\paragraph{Varying amount of training data}
In Figure~\ref{fig:data_tuning}, we plot the performance of Align methods vs. baselines when trained on different amounts of CIFAR-10 training data. At low levels of training data, Align methods perform comparably to Normal, with Align-ada consistently outperforming Normal on the lowest data setting. Low data settings may induce high input-weight alignment, and thereby similar learning dynamics between Align rules and backprop; this can allow better relative performance of Align rules. Align rules may also have a regularizing effect, allowing them to outperform backprop in very low data settings. Note that in the low data regime, Align methods perform relatively well even in the narrow network. The results demonstrate the efficacy of Align methods in low data regimes, which may be common in biological learning settings. See Appendix~\ref{app: tables} for full tabulated results.

\section{Conclusion}
In this paper, we found that weights of infinite width networks and wide finite width networks trained by backprop capture simple layerwise statistics of their inputs weighted by output errors. We used this finding to develop backprop-free learning rules that are equivalent to gradient descent in infinite-width networks. In a variety of empirical settings, the rules are comparable to gradient descent and outperform biologically-motivated baselines, and are especially effective on low data tasks.

\bibliography{example_paper}

\begin{thebibliography}{45}
\providecommand{\natexlab}[1]{#1}
\providecommand{\url}[1]{\texttt{#1}}
\expandafter\ifx\csname urlstyle\endcsname\relax
  \providecommand{\doi}[1]{doi: #1}\else
  \providecommand{\doi}{doi: \begingroup \urlstyle{rm}\Url}\fi

\bibitem[Akrout et~al.(2019)Akrout, Wilson, Humphreys, Lillicrap, and
  Tweed]{akrout2019deep}
Akrout, M., Wilson, C., Humphreys, P., Lillicrap, T., and Tweed, D.~B.
\newblock Deep learning without weight transport.
\newblock 32, 2019.

\bibitem[Arora et~al.(2019)Arora, Du, Hu, Li, Salakhutdinov, and
  Wang]{arora2019exact}
Arora, S., Du, S.~S., Hu, W., Li, Z., Salakhutdinov, R., and Wang, R.
\newblock On exact computation with an infinitely wide neural net.
\newblock \emph{NeurIPS}, 2019.

\bibitem[Arora et~al.(2020)Arora, Du, Li, Salakhutdinov, Wang, and
  Yu]{arora2020harnessing}
Arora, S., Du, S.~S., Li, Z., Salakhutdinov, R., Wang, R., and Yu, D.
\newblock Harnessing the power of infinitely wide deep nets on small-data
  tasks.
\newblock \emph{ICLR}, 2020.

\bibitem[Bartunov et~al.(2018)Bartunov, Santoro, Richards, Marris, Hinton, and
  Lillicrap]{bartunov2018assessing}
Bartunov, S., Santoro, A., Richards, B.~A., Marris, L., Hinton, G.~E., and
  Lillicrap, T.
\newblock Assessing the scalability of biologically-motivated deep learning
  algorithms and architectures.
\newblock \emph{NeurIPS}, 2018.

\bibitem[Bellec et~al.(2020)Bellec, Scherr, Hajek, Salaj, Legenstein, and
  Maass]{bellec2020solution}
Bellec, G., Scherr, F., Hajek, E., Salaj, D., Legenstein, R., and Maass, W.
\newblock A solution to the learning dilemma for recurrent networks of spiking
  neurons.
\newblock \emph{Nature Communications}, 2020.

\bibitem[Bengio et~al.(2016)Bengio, Lee, Jorg~Bornschein, and
  Lin]{bengio2016stdpvae}
Bengio, Y., Lee, D.-H., Jorg~Bornschein, T.~M., and Lin, Z.
\newblock Towards biologically plausible deep learning.
\newblock \emph{arXiv preprint}, 2016.

\bibitem[Clanuwat et~al.(2018)Clanuwat, Bober{-}Irizar, Kitamoto, Lamb,
  Yamamoto, and Ha]{clanuwat2018kmnist}
Clanuwat, T., Bober{-}Irizar, M., Kitamoto, A., Lamb, A., Yamamoto, K., and Ha,
  D.
\newblock Deep learning for classical japanese literature.
\newblock \emph{NeurIPS Workshop on Machine Learning for Creativity and
  Design}, 2018.

\bibitem[de~G.~Matthews et~al.(2018)de~G.~Matthews, Rowland, Hron, Turner, and
  Ghahramani]{matthews2018gaussian}
de~G.~Matthews, A.~G., Rowland, M., Hron, J., Turner, R.~E., and Ghahramani, Z.
\newblock Gaussian process behaviour in wide deep neural networks.
\newblock \emph{ICLR}, 2018.

\bibitem[Fiete \& Seung(2006)Fiete and Seung]{Fiete06}
Fiete, I. and Seung, H.
\newblock Gradient learning in spiking neural networks by dynamic perturbation
  of conductances.
\newblock \emph{Phys Rev Lett}, 97\penalty0 (4):\penalty0 048104, 2006.
\newblock ISSN 0031-9007 (Print).

\bibitem[Jacot et~al.(2018)Jacot, Gabriel, and Hongler]{jacot2018NTK}
Jacot, A., Gabriel, F., and Hongler, C.
\newblock Neural tangent kernel: convergence and generalization in neural
  networks.
\newblock \emph{NeurIPS}, 2018.

\bibitem[Kingma \& Ba(2015)Kingma and Ba]{kingma2015adam}
Kingma, D.~P. and Ba, J.
\newblock Adam: A method for stochastic optimization.
\newblock \emph{ICLR}, 2015.

\bibitem[Krizhevsky(2009)]{krizhevsky2009cifar}
Krizhevsky, A.
\newblock Learning multiple layers of features from tiny images.
\newblock 2009.

\bibitem[Launay et~al.(2020)Launay, Poli, Boniface, and
  Krzakala]{launay2020direct}
Launay, J., Poli, I., Boniface, F., and Krzakala, F.
\newblock Direct feedback alignment scales to modern deep learning tasks and
  architectures.
\newblock \emph{NeurIPS}, 2020.

\bibitem[Lee et~al.(2015)Lee, Zhang, Fischer, and Bengio]{lee2015difference}
Lee, D.-H., Zhang, S., Fischer, A., and Bengio, Y.
\newblock Difference target propagation.
\newblock \emph{ECML/PKDD}, 2015.

\bibitem[Lee et~al.(2018)Lee, Bahri, Novak, Schoenholz, Pennington, and
  Sohl-Dickstein]{lee2018deep}
Lee, J., Bahri, Y., Novak, R., Schoenholz, S.~S., Pennington, J., and
  Sohl-Dickstein, J.
\newblock Deep neural networks as gaussian processes.
\newblock \emph{ICLR}, 2018.

\bibitem[Lee et~al.(2019)Lee, Xiao, Schoenholz, Bahri, Novak, Sohl-Dickstein,
  and Pennington]{lee2019WideNN}
Lee, J., Xiao, L., Schoenholz, S.~S., Bahri, Y., Novak, R., Sohl-Dickstein, J.,
  and Pennington, J.
\newblock Wide neural networks of any depth evolve as linear models under
  gradient descent.
\newblock \emph{NeurIPS}, 2019.

\bibitem[Lee et~al.(2020)Lee, Schoenholz, Pennington, Adlam, Xiao, Novak, and
  Sohl-Dickstein]{lee2020finite}
Lee, J., Schoenholz, S.~S., Pennington, J., Adlam, B., Xiao, L., Novak, R., and
  Sohl-Dickstein, J.
\newblock Finite versus infinite neural networks: an empirical study.
\newblock \emph{NeurIPS}, 2020.

\bibitem[Liao et~al.(2016)Liao, Leibo, and Poggio]{liao2016important}
Liao, Q., Leibo, J.~Z., and Poggio, T.
\newblock How important is weight symmetry in backpropagation?
\newblock \emph{AAAI}, 2016.

\bibitem[Lillicrap et~al.(2016)Lillicrap, Cownden, Tweed, and
  Akerman]{lillicrap2016random}
Lillicrap, T.~P., Cownden, D., Tweed, D.~B., and Akerman, C.~J.
\newblock Random synaptic feedback weights support error backpropagation for
  deep learning.
\newblock \emph{Nature Communications}, 2016.

\bibitem[M~Colonnier(1981)]{colonnier1981number}
M~Colonnier, J.~O.
\newblock Number of neurons and synapses in the visual cortex of different
  species.
\newblock \emph{Rev Can Biol.}, 1981.

\bibitem[{Maennel} et~al.(2020){Maennel}, {Alabdulmohsin}, {Tolstikhin},
  {Baldock}, {Bousquet}, {Gelly}, and {Keysers}]{maennel2020what}
{Maennel}, H., {Alabdulmohsin}, I., {Tolstikhin}, I., {Baldock}, R. J.~N.,
  {Bousquet}, O., {Gelly}, S., and {Keysers}, D.
\newblock {What do neural networks learn when trained with random labels?}
\newblock \emph{arXiv preprint}, 2020.

\bibitem[Manchev \& Spratling(2020)Manchev and Spratling]{manchev2020target}
Manchev, N. and Spratling, M.
\newblock Target propagation in recurrent neural networks.
\newblock \emph{Journal of Machine Learning Research}, 21\penalty0
  (7):\penalty0 1--33, 2020.

\bibitem[Marschall et~al.(2020)Marschall, Cho, and Savin]{marschall2020unified}
Marschall, O., Cho, K., and Savin, C.
\newblock {A unified framework of online learning algorithms for training
  recurrent neural networks}.
\newblock \emph{JMLR}, 2020.

\bibitem[Meulemans et~al.(2020)Meulemans, Carzaniga, Suykens, Sacramento, and
  Grewe]{meulemans2020theoretical}
Meulemans, A., Carzaniga, F.~S., Suykens, J. A.~K., Sacramento, J., and Grewe,
  B.~F.
\newblock A theoretical framework for target propagation.
\newblock \emph{NeurIPS}, 2020.

\bibitem[Millidge et~al.(2020)Millidge, Tschantz, and
  Buckley]{millidge2020predictive}
Millidge, B., Tschantz, A., and Buckley, C.~L.
\newblock Predictive coding approximates backprop along arbitrary computation
  graphs.
\newblock \emph{arXiv preprint}, 2020.

\bibitem[{N{\o}kland} \& {Hiller Eidnes}(2019){N{\o}kland} and {Hiller
  Eidnes}]{nokland2019training}
{N{\o}kland}, A. and {Hiller Eidnes}, L.
\newblock {Training neural networks with local error signals}.
\newblock \emph{ICML}, 2019.

\bibitem[Novak et~al.(2019)Novak, Xiao, Lee, Bahri, Yang, Hron, Abolafia,
  Pennington, and Sohl-Dickstein]{novak2019bayesian}
Novak, R., Xiao, L., Lee, J., Bahri, Y., Yang, G., Hron, J., Abolafia, D.~A.,
  Pennington, J., and Sohl-Dickstein, J.
\newblock Bayesian deep convolutional networks with many channels are gaussian
  processes.
\newblock \emph{ICLR}, 2019.

\bibitem[Nøkland(2016)]{nokland2016direct}
Nøkland, A.
\newblock Direct feedback alignment provides learning in deep neural networks.
\newblock \emph{NeurIPS}, 2016.

\bibitem[Ororbia \& Kifer(2022)Ororbia and Kifer]{ororbia2022neural}
Ororbia, A. and Kifer, D.
\newblock The neural coding framework for learning generative models.
\newblock \emph{arXiv preprint}, 2022.

\bibitem[Ororbia et~al.(2020{\natexlab{a}})Ororbia, Mali, Giles, and
  Kifer]{ororbia2020continual}
Ororbia, A., Mali, A., Giles, C.~L., and Kifer, D.
\newblock Continual learning of recurrent neural networks by locally aligning
  distributed representations.
\newblock \emph{IEEE Transactions on Neural Networks and Learning Systems},
  31\penalty0 (10):\penalty0 4267--4278, 2020{\natexlab{a}}.
\newblock \doi{10.1109/TNNLS.2019.2953622}.

\bibitem[Ororbia et~al.(2020{\natexlab{b}})Ororbia, Mali, Kifer, and
  Giles]{ororbia2020largescale}
Ororbia, A., Mali, A., Kifer, D., and Giles, C.~L.
\newblock Large-scale gradient-free deep learning with recursive local
  representation alignment.
\newblock \emph{arXiv preprint}, 2020{\natexlab{b}}.

\bibitem[Ororbia \& Mali(2019)Ororbia and Mali]{ororbia2019biologically}
Ororbia, A.~G. and Mali, A.
\newblock Biologically motivated algorithms for propagating local target
  representations.
\newblock \emph{Proceedings of the AAAI Conference on Artificial Intelligence},
  33\penalty0 (01):\penalty0 4651--4658, Jul. 2019.
\newblock \doi{10.1609/aaai.v33i01.33014651}.

\bibitem[Ororbia et~al.(2018)Ororbia, Mali, Kifer, and
  Giles]{ororbia2018conducting}
Ororbia, A.~G., Mali, A., Kifer, D., and Giles, C.~L.
\newblock Conducting credit assignment by aligning local representations.
\newblock \emph{arXiv preprint}, 2018.

\bibitem[Refinetti et~al.(2021)Refinetti, d'Ascoli, Ohana, and
  Goldt]{refinetti2021align}
Refinetti, M., d'Ascoli, S., Ohana, R., and Goldt, S.
\newblock Align, then memorise: the dynamics of learning with feedback
  alignment.
\newblock \emph{ICML}, 2021.

\bibitem[Richards et~al.(2019)Richards, Lillicrap, Beaudoin, Bengio, Bogacz,
  Christensen, Clopath, Costa, Berker, Ganguli, Gillon, Hafner, Kepecs,
  Kriegeskorte, Latham, Lindsay, Miller, Naud, Pack, and
  Kording]{richards2019deep}
Richards, B., Lillicrap, T., Beaudoin, P., Bengio, Y., Bogacz, R., Christensen,
  A., Clopath, C., Costa, R., Berker, A., Ganguli, S., Gillon, C., Hafner, D.,
  Kepecs, A., Kriegeskorte, N., Latham, P., Lindsay, G., Miller, K., Naud, R.,
  Pack, C., and Kording, K.
\newblock A deep learning framework for neuroscience.
\newblock \emph{Nature Neuroscience}, 22:\penalty0 1761--1770, 11 2019.
\newblock \doi{10.1038/s41593-019-0520-2}.

\bibitem[Roth et~al.(2019)Roth, Kanitscheider, and Fiete]{Roth19}
Roth, C., Kanitscheider, I., and Fiete, I.
\newblock Kernel rnn learning (kernl).
\newblock 2019.

\bibitem[Russakovsky et~al.(2015)Russakovsky, Deng, Su, Krause, Satheesh, Ma,
  Huang, Karpathy, Khosla, Bernstein, Berg, and
  Fei-Fei]{russakovsky2015Imagenet}
Russakovsky, O., Deng, J., Su, H., Krause, J., Satheesh, S., Ma, S., Huang, Z.,
  Karpathy, A., Khosla, A., Bernstein, M., Berg, A.~C., and Fei-Fei, L.
\newblock {ImageNet large scale visual recognition challenge}.
\newblock \emph{IJCV}, 2015.

\bibitem[Saxe et~al.(2019)Saxe, McClelland, and Ganguli]{saxe2019mathematical}
Saxe, A.~M., McClelland, J.~L., and Ganguli, S.
\newblock A mathematical theory of semantic development in deep neural
  networks.
\newblock \emph{PNAS}, 2019.

\bibitem[Seung(2003)]{Seung03}
Seung, H.~S.
\newblock Learning in spiking neural networks by reinforcement of stochastic
  synaptic transmission.
\newblock \emph{Neuron}, 40\penalty0 (6):\penalty0 1063--1073, 2003.
\newblock ISSN 0896-6273 (Print).

\bibitem[Song et~al.(2021)Song, Xu, and Lafferty]{song2021convergence}
Song, G., Xu, R., and Lafferty, J.
\newblock Convergence and alignment of gradient descent with random
  backpropagation weights.
\newblock \emph{NeurIPS}, 2021.

\bibitem[Williams \& Zipser(1989)Williams and Zipser]{williams1989learning}
Williams, R.~J. and Zipser, D.
\newblock A learning algorithm for continually running fully recurrent neural
  networks.
\newblock \emph{Neural Computation}, 1\penalty0 (2):\penalty0 270--280, 1989.
\newblock \doi{10.1162/neco.1989.1.2.270}.

\bibitem[Xiao et~al.(2018)Xiao, Chen, Liao, and
  Poggio]{xiao2018biologicallyplausible}
Xiao, W., Chen, H., Liao, Q., and Poggio, T.
\newblock Biologically-plausible learning algorithms can scale to large
  datasets.
\newblock \emph{arXiv preprint}, 2018.

\bibitem[Yang \& Hu(2020)Yang and Hu]{yang2020feature}
Yang, G. and Hu, E.~J.
\newblock Feature learning in infinite-width neural networks.
\newblock \emph{arXiv preprint}, 2020.

\bibitem[Yosinski et~al.(2014)Yosinski, Clune, Bengio, and
  Lipson]{yosinski2014transferable}
Yosinski, J., Clune, J., Bengio, Y., and Lipson, H.
\newblock How transferable are features in deep neural networks?
\newblock \emph{NeurIPS}, 2014.

\bibitem[Zhang et~al.(2017)Zhang, Bengio, Hardt, Recht, and
  Vinyals]{zhang2017understanding}
Zhang, C., Bengio, S., Hardt, M., Recht, B., and Vinyals, O.
\newblock Understanding deep learning requires rethinking generalization.
\newblock \emph{ICLR}, 2017.

\end{thebibliography}
\bibliographystyle{icml2022}

%%%%%%%%%%%%%%%%%%%%%%%%%%%%%%%%%%%%%%%%%%%%%%%%%%%%%%%%%%%%%%%%%%%%%%%%%%%%%%%
%%%%%%%%%%%%%%%%%%%%%%%%%%%%%%%%%%%%%%%%%%%%%%%%%%%%%%%%%%%%%%%%%%%%%%%%%%%%%%%
% APPENDIX
%%%%%%%%%%%%%%%%%%%%%%%%%%%%%%%%%%%%%%%%%%%%%%%%%%%%%%%%%%%%%%%%%%%%%%%%%%%%%%%
%%%%%%%%%%%%%%%%%%%%%%%%%%%%%%%%%%%%%%%%%%%%%%%%%%%%%%%%%%%%%%%%%%%%%%%%%%%%%%%
\newpage
\renewcommand{\thesubsection}{\Alph{subsection}}

\appendix
\onecolumn
\section*{Appendix}

\subsection{Proof of Proposition~\ref{prop: inf_width_alignment}} \label{prop1_proof}
We first prove a lemma that will be used in the proofs of both Propositions 1 and 2. The lemma intuitively demonstrates that 1) the weights and activations of infinitely wide neural networks change an infinitesimally small amount during training, 2) thus, the feedback Jacobian $\nabla_\theta f^{(t)}(x)$ changes infinitesimally little during training, 3) the approximations of the feedback Jacobian of Align rules have infinitesimally small approximation error. The proof of Proposition 1 proceeds by quantifying the discrepancy between the actual weight at each layer during training by gradient descent and the weight if the weight updates were computed with a fixed feedback Jacobian $\nabla_\theta f^{(0)}(x)$ instead of $\nabla_\theta f^{(t)}(x)$. As network width grows, this discrepancy is shown to approach $0$ at a rate faster than the rate at which weight changes approach $0$. Thus, in the infinite width limit, the weight changes behave \textit{as if} the feedback Jacobian were fixed at initialization. This means the weight change correlation matrix can be exactly equated to the time $0$ input correlation matrix.
\paragraph{Lemma 1. }
Assume layers are randomly initialized according to the neural tangent initialization~\cite{jacot2018NTK}. Also, assume that the nonlinearity $\sigma(\cdot)$ has bounded first and second derivatives. Suppose the network $f^{(t)}(x)$ is trained with Align-ada, Align-zero or gradient descent of learning rate $\eta$ on the mean-squared error loss for $T$ time such that $\max_x ||y(x)-f^{(t)}(x)||_2$ is uniformly bounded in $t \in [0, T]$. Denote $\theta$ as the parameters of the network, and for Align-zero and Align-ada, define $J^{(t)}$ as the approximation of $\nabla_\theta f^{(t)}(x)$ such that the corresponding network evolves as:
\begin{equation}
    \dot{f}^{(t)}(x) = \eta \nabla_\theta f^{(t)}(x)^T  \mathbb{E}[J^{(t)}(x) (y(x) - f^{(t)}(x))]
\end{equation}
Note that for Align-zero, $J^{(t)} = \nabla_\theta f^{(0)}(x)$.
For gradient descent, define $J^{(t)} = \nabla_\theta f^{(t)}(x)$. For Align-ada, $J^{(t)}$ is defined by first expanding $\nabla_\theta f^{(0)}(x)$ in terms of the weights and activations of the network, then replacing activations $z^{(0)}(x)$ at time $0$ with current activations $z^{(t)}(x)$. Then, in the simultaneous limit $\lim_{m_{N-1}, m_{N-2}, ... m_1 \to \infty}$:
\begin{align}
    \forall l=1, ... N-1, \sqrt{\mathbb{E}[||\frac{1}{\sqrt{m_l}} z^{(t)}_l(x) - \frac{1}{\sqrt{m_l}} z^{(0)}_l(x)||_2^2]}&\in \mathcal{O}(\frac{1}{\sqrt{\min \{ m_1, ... m_{N-1} \} }}) \\ \forall l=1, ... N, ||\frac{1}{\sqrt{m_{l-1}}} {W}_{l}^{(t)} - \frac{1}{\sqrt{m_{l-1}}} {W}_{l}^{(0)}||_{op} &\in \mathcal{O}(\frac{1}{\sqrt{\min \{ m_1, ... m_{N-1} \} }}) \\
    \max_x ||\nabla_\theta f^{(t)}(x) - \nabla_\theta f^{(0)}(x)||_{op} &\in \mathcal{O}(\frac{1}{\sqrt{\min \{ m_1, ... m_{N-1} \} }}) \\
    \max_x ||J^{(t)}(x) - \nabla_\theta f^{(0)}(x)||_{op} &\in \mathcal{O}(\frac{1}{\sqrt{\min \{ m_1, ... m_{N-1} \} }})
\end{align}

\begin{proof}
We adapt Lemma 1 of \cite{jacot2018NTK} to show that weights and intermediate layer activations of networks trained with Align-zero, Align-ada or gradient descent do not change over the course of training in the simultaneous limit $\lim_{m_{N-1}, m_{N-2}, ... m_1 \to \infty}$.

Define $a_l^{(t)} = \sqrt{\mathbb{E}[||\frac{1}{\sqrt{m_l}} z_l^{(t)}(x)||_2^2]}$, $w_l^{(t)} = ||\frac{1}{\sqrt{m_{l-1}}} {W}_{l}^{(t)}||_{op}$, $\tilde{a}_l^{(t)} = \sqrt{\mathbb{E}[||\frac{1}{\sqrt{m_l}} z^{(t)}_l(x) - \frac{1}{\sqrt{m_l}} z^{(0)}_l(x)||_2^2]}$, $\tilde{w}_l^{(t)} = ||\frac{1}{\sqrt{m_{l-1}}} {W}_{l}^{(t)} - \frac{1}{\sqrt{m_{l-1}}} {W}_{l}^{(0)}||_{op}$, where $W_l^{(t)}$ and $z_l^{(t)}(x)$ represent the weights and layers of the network trained with Align-zero, Align-ada or gradient descent. Note that the derivatives of $\tilde{w}_l^{(t)}$ can be bounded as:
\begin{equation}
    \dot{\tilde{w}}_l^{(t)} \leq ||\frac{1}{\sqrt{m_{l-1}}} \dot{W}_{l}^{(t)}||_{op} \leq \frac{1}{\sqrt{m_{l-1}}} \mathcal{P}(a_{l-1}^{(0)}, a_{l-1}^{(t)}, w^{(0)}_{l+1}, w^{(t)}_{l+1}, w^{(0)}_{l+2}, w^{(t)}_{l+2}, ... w^{(0)}_{N}, w^{(t)}_{N}) \max_x ||y(x) - f^{(t)}(x)||_2
\end{equation}
where $\mathcal{P}$ is a polynomial with bounded positive coefficients not depending on the width of any layers in the network. This is because weight changes at layer $l$ depend on the size of the previous layer ($a_{l-1}$) and the sizes of weights in later layers through the feedback process ($w_{l+1}, w_{l+2}, ... w_N$). Note that the coefficients of $\mathcal{P}$ can be bounded because $\sigma(\cdot)$ has a bounded first derivative. Similarly, the derivatives of $\tilde{a}_l^{(t)}$ can be bounded as:
\begin{equation}
    \dot{\tilde{a}}_l^{(t)} \leq \frac{1}{\sqrt{m_{l}}} \mathcal{P}(a_{0}^{(0)}, a_{0}^{(t)}, ... a_{l-1}^{(0)}, a_{l-1}^{(t)}, w_1^{(0)}, w_1^{(t)}, ... w^{(0)}_{N}, w^{(t)}_N) \max_x ||y(x) - f^{(t)}(x)||_2
\end{equation}
where $\mathcal{P}$ is another polynomial with bounded positive coefficients not depending on the width of any layers in the network. This is because the intermediate layer activations at layer $l$ change as the weights in earlier layers change, which depend on $w$ and $a$ as described above. Next, define $A(t)$ as:
\begin{equation}
    \sum_{l = 0}^{N-1} a_l^{(0)} + \tilde{a}_l^{(t)} + w_{l+1}^{(0)} + \tilde{w}_{l+1}^{(t)}
\end{equation}
Observe that using the previous bounds on $\dot{\tilde{w}}_l^{(t)}$ and $\dot{\tilde{a}}_l^{(t)}$, the derivative of $A(t)$ can be bounded as:
\begin{equation}
    \frac{d}{dt} A(t) \leq \frac{1}{\sqrt{\min \{ m_1, ... m_{N-1} \} }} \mathcal{P}(A(t)) \max_x ||y(x) - f^{(t)}(x)||_2
\end{equation}
where $\mathcal{P}$ is a polynomial with coefficients not depending on the width of any layers in the network. Finally, observe that $A(0)$ is stochastically bounded as it is only a function of $a_l^{(0)}$ and $w_l^{(0)}$. By Grönwall's inequality, we find that $A(t)$ can be uniformly bounded in an interval $[0, \tau]$, with $\tau$ approaching $T$ as $\min \{ m_1, ... m_{N-1} \} \to \infty$. This implies $\mathcal{P}(A(t))$ can be uniformly bounded in this interval. Recall also our assumption that $\max_x ||y(x) - f^{(t)}(x)||_2$ is also uniformly bounded in this interval. Then, due to the $\frac{1}{\sqrt{\min \{ m_1, ... m_{N-1} \} }}$ coefficient in the bound for $\frac{d}{dt} A(t)$, in the simultaneous limit $\lim_{m_{N-1}, m_{N-2}, ... m_1 \to \infty}$, $\frac{d}{dt} A(t)$ converges uniformly to $0$ at rate $\mathcal{O}(\frac{1}{\sqrt{\min \{ m_1, ... m_{N-1} \} }})$ in any interval $[0, \tau]$ where $\tau < T$. Thus, $A(t) = A(0)$ in this limit. Therefore, in the simultaneous limit $\lim_{m_{N-1}, m_{N-2}, ... m_1 \to \infty}$, $\tilde{a}_l^{(t)}$ and ${\tilde{w}}_l^{(t)}$ converge uniformly to $0$ in $[0, T]$ at rate $\mathcal{O}(\frac{1}{\sqrt{\min \{ m_1, ... m_{N-1} \} }})$.

Next, we prove that in the same limit,     
\begin{align}
\max_x ||\nabla_\theta f^{(t)}(x) - \nabla_\theta f^{(0)}(x)||_{op} &\in \mathcal{O}(\frac{1}{\sqrt{\min \{ m_1, ... m_{N-1} \} }}) \\
    \max_x ||J^{(t)}(x) - \nabla_\theta f^{(0)}(x)||_{op} &\in \mathcal{O}(\frac{1}{\sqrt{\min \{ m_1, ... m_{N-1} \} }})
\end{align}
First, observe that for $l=1,...N-1$, $\nabla_{z_l} f_i^{(t)}(x)$ can be expanded as:
    \begin{equation}
        \nabla_{z_l} f_i^{(t)}(x) = diag(\sigma'(z_{l-1}^{(t)}(x))) \frac{1}{\sqrt{m_{l-1}}} W^{(t)T}_{l} ... \frac{1}{\sqrt{m_{N-2}}} W^{(t)T}_{N-1} diag(\sigma'(z_{N-1}^{(t)}(x))) \frac{1}{\sqrt{m_{N-1}}} W^{(t)T}_{N, :, i}
    \end{equation}
    Similarly, $\nabla_{z_l} f_i^{(0)}(x)$ can be expanded as:
    \begin{equation}
        \nabla_{z_l} f_i^{(0)}(x) = diag(\sigma'(z_{l-1}^{(0)}(x))) \frac{1}{\sqrt{m_{l-1}}} W^{(0)T}_{l} ... \frac{1}{\sqrt{m_{N-2}}} W^{(0)T}_{N-1} diag(\sigma'(z_{N-1}^{(0)}(x))) \frac{1}{\sqrt{m_{N-1}}} W^{(0)T}_{N, :, i}
    \end{equation}
Thus, $\nabla_\theta f^{(t)}(x)$ and $J^{(t)}(x)$, which are formed of products of layerwise Jacobians $\nabla_{z} f(x)$ with intermediate layer activations $\sigma(z(x))$, can be written as sum of products of terms involving $z_l(x)$ and $\frac{1}{\sqrt{m_{l-1}}} W_{l}$ at time $t$ and time $0$. Specifically, since $\nabla_\theta f^{(t)}(x)$ is a function of weights and intermediate layer activations at time $t$:
\begin{multline}
    \max_x ||\nabla_\theta f^{(t)}(x) - \nabla_\theta f^{(0)}(x)||_{op} \leq \sum_{l=1}^{N} \tilde{w}_l^{(t)} \mathcal{P}_{w,l}(a_0^{(0)}, a_{0}^{(t)}, ... a_{N-1}^{(0)}, a_{N-1}^{(t)}, w_1^{(0)}, w_1^{(t)}, ... w^{(0)}_{N}, w^{(t)}_N) \\ + \sum_{l=0}^{N-1} \tilde{a}_l^{(t)} \mathcal{P}_{a,l}(a_0^{(0)}, a_{0}^{(t)}, ... a_{N-1}^{(0)}, a_{N-1}^{(t)}, w_1^{(0)}, w_1^{(t)}, ... w^{(0)}_{N}, w^{(t)}_N)
\end{multline}
where $\mathcal{P}_{w,l}$ and $\mathcal{P}_{a,l}$ are polynomials with positive coefficients not depending on any layer's width. This is because the time-variation of the gradient of any parameter in the network is constrained either by the variation in a weight $\tilde{w}_l^{(t)}$ or variation in a layer's activations $\tilde{a}_l^{(t)}$. Similarly, 
\begin{multline}
    \max_x ||J^{(t)}(x) - \nabla_\theta f^{(0)}(x)||_{op} \leq \sum_{l=1}^{N} \tilde{w}_l^{(t)} \mathcal{P}_{w,l}(a_0^{(0)}, a_{0}^{(t)}, ... a_{N-1}^{(0)}, a_{N-1}^{(t)}, w_1^{(0)}, w_1^{(t)}, ... w^{(0)}_{N}, w^{(t)}_N) \\ + \sum_{l=0}^{N-1} \tilde{a}_l^{(t)} \mathcal{P}_{a,l}(a_0^{(0)}, a_{0}^{(t)}, ... a_{N-1}^{(0)}, a_{N-1}^{(t)}, w_1^{(0)}, w_1^{(t)}, ... w^{(0)}_{N}, w^{(t)}_N)
\end{multline}
because for all three of Align-ada, Align-zero and gradient descent, $J^{(t)}(x)$ is a function of weights and intermediate layer activations at time $t$ or time $0$. Note that the left hand size is $0$ for Align-zero since $J^{(t)}(x) = \nabla_\theta f^{(0)}(x)$. Using the boundedness of $A(t)$, observe that $\max_x ||J^{(t)}(x) - \nabla_\theta f^{(0)}(x)||_{op}$ and $\max_x ||\nabla_\theta f^{(t)}(x) - \nabla_\theta f^{(0)}(x)||_{op}$ converge to $0$ at the same rate as $\tilde{W}_l^{(t)}$ and $\tilde{a}_l^{(t)}$. Therefore,
\begin{align}
\max_x ||\nabla_\theta f^{(t)}(x) - \nabla_\theta f^{(0)}(x)||_{op} &\in \mathcal{O}(\frac{1}{\sqrt{\min \{ m_1, ... m_{N-1} \} }}) \\
    \max_x ||J^{(t)}(x) - \nabla_\theta f^{(0)}(x)||_{op} &\in \mathcal{O}(\frac{1}{\sqrt{\min \{ m_1, ... m_{N-1} \} }})
\end{align}
\end{proof}

Next, we prove Proposition 1:
\paragraph{Proposition 1.} Assume layers are randomly initialized according to the neural tangent initialization~\cite{jacot2018NTK}. Also, assume that the nonlinearity $\sigma(\cdot)$ has bounded first and second derivatives. Suppose the network is trained with continuous gradient flow with learning rate $\eta$ on the mean-squared error loss for $T$ time such that $\max_x ||y(x)-f^{(t)}(x)||_2$ is uniformly bounded in $t \in [0, T]$. Then, in the simultaneous limit $\lim_{m_{N-1}, m_{N-2}, ... m_1 \to \infty}$:
    \begin{equation}
        \frac{1}{\sqrt{m_{l-1}}} || \frac{1}{m_{l-1}} \Sigma^{(0)}_{l,q^{(t)}_l} -  \Delta^{(t)}_l ||_{op} \in \mathcal{O}(\frac{1}{\min \{ m_1, ... m_{N-1} \} })
    \end{equation}
\begin{proof}
    First, by Lemma 1, observe that in the simultaneous limit $\lim_{m_{N-1}, m_{N-2}, ... m_1 \to \infty}$:
    \begin{align}
    \sqrt{\mathbb{E}[||\frac{1}{\sqrt{m_l}} z^{(t)}_l(x) - \frac{1}{\sqrt{m_l}} z^{(0)}_l(x)||_2^2]}&\in \mathcal{O}(\frac{1}{\sqrt{\min \{ m_1, ... m_{N-1} \} }}) \\ ||\frac{1}{\sqrt{m_{l-1}}} {W}_{l}^{(t)} - \frac{1}{\sqrt{m_{l-1}}} {W}_{l}^{(0)}||_{op} &\in \mathcal{O}(\frac{1}{\sqrt{\min \{ m_1, ... m_{N-1} \} }}) \\
    \max_x ||\nabla_\theta f^{(t)}(x) - \nabla_\theta f^{(0)}(x)||_{op} &\in \mathcal{O}(\frac{1}{\sqrt{\min \{ m_1, ... m_{N-1} \} }})
    \end{align}
    Since $\nabla_{z_l} f(x)$ is a component of $\nabla_\theta f(x)$, corresponding to bias parameters, this result implies that $\mathbb{E}_{p_x} [ ||\nabla_{z_l} f^{(t)}(x) - \nabla_{z_l} f^{(0)}(x)||_{op}]$ also converges to $0$ at rate $\mathcal{O}(\frac{1}{\sqrt{\min \{ m_1, ... m_{N-1} \} }})$.

    Next, we define $\tilde{W}^{(t)}$ as:
    \begin{equation}
        \tilde{W}_l^{(t)} = W_l^{(t)} - W_l^{(0)} - \frac{1}{\sqrt{m_{l-1}}} \eta \mathbb{E}_{p_x}[\nabla_{z_l} f^{(0)}(x) [\int_{0}^t (y(x)-f^{(\tau)}(x)) d \tau] \sigma(z_{l-1}^{(0)}(x))^T]
    \end{equation}
    Note that $\tilde{W}_l^{(0)}= 0$. Taking the derivative of both sides:
    \begin{multline}
        \dot{\tilde{W}}_l^{(t)} = \frac{1}{\sqrt{m_{l-1}}} \eta \mathbb{E}_{p_x}[\nabla_{z_l} f^{(t)}(x) (y(x)-f^{(t)}(x)) \sigma(z_{l-1}^{(t)}(x))^T] \\ - \frac{1}{\sqrt{m_{l-1}}} \eta \mathbb{E}_{p_x}[\nabla_{z_l} f^{(0)}(x) (y(x)-f^{(t)}(x)) \sigma(z_{l-1}^{(0)}(x))^T]
        \\= \frac{1}{\sqrt{m_{l-1}}} \eta \mathbb{E}_{p_x}[\nabla_{z_l} f^{(t)}(x) (y(x)-f^{(t)}(x)) (\sigma(z_{l-1}^{(t)}(x)) - \sigma(z_{l-1}^{(0)}(x)))^T]
        \\ + \frac{1}{\sqrt{m_{l-1}}} \eta \mathbb{E}_{p_x}[(\nabla_{z_l} f^{(t)}(x) - \nabla_{z_l} f^{(0)}(x)) (y(x)-f^{(t)}(x)) \sigma(z_{l-1}^{(0)}(x))^T]
    \end{multline}
Bounding the norm of $\dot{\tilde{W}}_l^{(t)}$:
\begin{multline}
    ||\dot{\tilde{W}}_l^{(t)}||_{op} \leq \frac{1}{\sqrt{m_{l-1}}} \eta \mathbb{E}_{p_x}[||\nabla_{z_l} f^{(t)}(x) (y(x)-f^{(t)}(x)) (\sigma(z_{l-1}^{(t)}(x)) - \sigma(z_{l-1}^{(0)}(x)))^T||_{op}]
    \\ + \frac{1}{\sqrt{m_{l-1}}} \eta \mathbb{E}_{p_x}[||(\nabla_{z_l} f^{(t)}(x) - \nabla_{z_l} f^{(0)}(x)) (y(x)-f^{(t)}(x)) \sigma(z_{l-1}^{(0)}(x))^T||_{op}]
    \\ \leq \frac{1}{\sqrt{m_{l-1}}} \eta \mathbb{E}_{p_x}[||\nabla_{z_l} f^{(t)}(x)||_{op} ||y(x)-f^{(t)}(x)||_2 ||\sigma(z_{l-1}^{(t)}(x)) - \sigma(z_{l-1}^{(0)}(x))||_2]
    \\ + \frac{1}{\sqrt{m_{l-1}}} \eta \mathbb{E}_{p_x}[||\nabla_{z_l} f^{(t)}(x) - \nabla_{z_l} f^{(0)}(x)||_{op} ||y(x)-f^{(t)}(x)||_2 ||\sigma(z_{l-1}^{(0)}(x))||_{2}]
    \\ \leq c \eta \max_x ||\nabla_{z_l} f^{(t)}(x)||_{op} \max_x||y(x)-f^{(t)}(x)||_2 \mathbb{E}_{p_x} [||\frac{1}{\sqrt{m_{l-1}}} (z_{l-1}^{(t)}(x) - z_{l-1}^{(0)}(x))||_2]
    \\ + \eta \mathbb{E}_{p_x} [ ||\nabla_{z_l} f^{(t)}(x) - \nabla_{z_l} f^{(0)}(x)||_{op}] \max_x ||y(x)-f^{(t)}(x)||_2 \max_x ||\frac{1}{\sqrt{m_{l-1}}}\sigma(z_{l-1}^{(0)}(x))||_{2}
\end{multline}
where $c$ is the Lipschitz constant of $\sigma$. Next, observe that $\max_{t \in [0, T]} \max_x ||\nabla_{z_l} f^{(t)}(x)||_{op}$ and $\max_{t \in [0, T]} \max_x ||\frac{1}{\sqrt{m_{l-1}}}\sigma(z_{l-1}^{(0)}(x))||_{2}$ are bounded in the simultaneous limit $\lim_{m_{N-1}, m_{N-2}, ... m_1 \to \infty}$.  Moreover, $\mathbb{E}_{p_x} [||\frac{1}{\sqrt{m_{l-1}}} (z_{l-1}^{(t)}(x) - z_{l-1}^{(0)}(x))||_2]$ and $\mathbb{E}_{p_x} [ ||\nabla_{z_l} f^{(t)}(x) - \nabla_{z_l} f^{(0)}(x)||_{op}]$ approach $0$ in this limit at rate $\mathcal{O}(\frac{1}{\sqrt{\min \{ m_1, ... m_{N-1} \} }})$. Since $\max_x ||y(x)-f^{(t)}(x)||_2$ is uniformly bounded in $t \in [0, T]$ by assumption, $\max_{t \in [0, T]} ||\dot{\tilde{W}}_l^{(t)}||_{op}$ also approaches $0$ in this limit at rate $\mathcal{O}(\frac{1}{\sqrt{\min \{ m_1, ... m_{N-1} \} }})$. Thus,
\begin{multline}
   \lim_{m_{N-1}, m_{N-2}, ... m_1 \to \infty}  ||\tilde{W}_l^{(t)}||_{op} \leq  \lim_{m_{N-1}, m_{N-2}, ... m_1 \to \infty}  \int_0^t ||\dot{\tilde{W}}_l^{(\tau)}|| d \tau \\ \leq \int_0^t \lim_{m_{N-1}, m_{N-2}, ... m_1 \to \infty} ||\dot{\tilde{W}}_l^{(\tau)}|| d \tau \in \mathcal{O}(\frac{1}{\sqrt{\min \{ m_1, ... m_{N-1} \} }})
\end{multline}

Next, observe that:
\begin{multline}
    \Sigma^{(0)}_{l,q^{(t)}_l} = \mathbb{E}_{x_1\sim p_x, x_2\sim p_x}[ \sigma(z_{l-1}^{(0)}(x_1)) \delta^{(t)}(x_1)^T \nabla_{z_l} f^{(0)}(x_1)^T \nabla_{z_l} f^{(0)}(x_2) \delta^{(t)}(x_2) \sigma(z_{l-1}^{(0)}(x_2))^T] \\= m_{l-1} (W_l^{(t)}-W_l^{(0)}-\tilde{W}_l^{(0)})^T (W_l^{(t)}-W_l^{(0)}-\tilde{W}_l^{(0)})
\end{multline}
This implies:
\begin{multline}
    \frac{1}{\sqrt{m_{l-1}}} || \frac{1}{m_{l-1}} \Sigma^{(0)}_{l,q^{(t)}_l} -  \Delta^{(t)}_l ||_{op} \\ =  \frac{1}{\sqrt{m_{l-1}}}|| (W_l^{(t)}-W_l^{(0)}-\tilde{W}_l^{(t)})^T (W_l^{(t)}-W_l^{(0)}-\tilde{W}_l^{(t)}) - (W_l^{(t)}-W_l^{(0)})^T (W_l^{(t)}-W_l^{(0)})||_{op}
    \\ \leq 2 \frac{1}{\sqrt{m_{l-1}}}|| (W_l^{(t)}-W_l^{(0)})||_{op} ||\tilde{W}_l^{(t)}||_{op} + \frac{1}{\sqrt{m_{l-1}}}||\tilde{W}_l^{(t)T} \tilde{W}_l^{(0)}||_{op}
\end{multline}
Finally, note that $\frac{1}{\sqrt{m_{l-1}}}|| (W_l^{(t)}-W_l^{(0)})||_{op}$ converges to $0$ at rate $\mathcal{O}(\frac{1}{\sqrt{\min \{ m_1, ... m_{N-1} \} }})$. Combining this with the convergence of $||\tilde{W}_l^{(t)}||_{op}$ to $0$, we find that $\frac{1}{\sqrt{m_{l-1}}} || \frac{1}{m_{l-1}} \Sigma^{(0)}_{l,q^{(t)}_l} -  \Delta^{(t)}_l ||_{op}$ converges to $0$ at rate $\mathcal{O}(\frac{1}{\min \{ m_1, ... m_{N-1} \} })$.
\end{proof}
\paragraph{Interpretation of the width scaling in Proposition 1}
Observe that as network width approaches infinity, both weights and intermediate layer activations change vanishingly over the course of training. Thus, the corresponding correlation matrices $\Delta_l^{(t)}$ and $\Sigma^{(0)}_{l, q_l^{(t)}}$ also approach zero under the appropriate width-dependent normalizations. In light of this, it is important to consider whether Proposition 1 really shows an alignment effect between $\Delta_l^{(t)}$ and $\Sigma^{(0)}_{l, q_l^{(t)}}$ or merely holds because the correlation matrices individually approach zero.

Note that since $||W_l^{(t)} - W_l^{(0)}||_{op}$ grows as $o(\sqrt{m_{l-1}})$, the weight change correlation $||\Delta_l^{(t)}||_{op}$ grows as $o(m_{l-1})$. Thus, $\frac{1}{\sqrt{m_{l-1}}}||\Delta_l^{(t)}||_{op}$ grows as $o(\sqrt{m_{l-1}})$, which will generally \textit{not} converge to zero. Similarly, activated layer norms $||\sigma(z_{l-1}^{(0)}(x))||_2$ grow as $O(m_{l-1})$, implying $||\Sigma^{(0)}_{l, q_l^{(t)}}||_{op}$ grows as $O(m_{l-1}^2)$. Thus, $\frac{1}{\sqrt{m_{l-1}}}||\frac{1}{m_{l-1}} \Sigma^{(0)}_{l, q_l^{(t)}}||_{op}$ grows as $O(\sqrt{m_{l-1}})$, which also will not converge to zero in general. Thus, neither term in the Proposition 1 expression individually approaches zero, and the result does not trivially follow from the individual terms separately approaching zero. Therefore, the convergence of the norm difference in Proposition 1 demonstrates that the two correlation matrices point in the same direction away from the origin and indeed can be interpreted as an alignment effect.

\subsection{Proof of Proposition~\ref{prop: inf_width_linearized}} \label{prop2_proof}
The proof of Proposition 2 starts by quantifying the difference between the output of an Align-trained network and a backprop-trained network during training. This difference is shown to be bounded by the discrepancy between the feedback Jacobian $\nabla_\theta f^{(t)}(x)$ of the backprop network, its true value in the Align-trained network, and its approximation in the Align-trained network. By Lemma 1, these discrepancies approach $0$ in infinite-width networks; thus, Align rules approach gradient descent in the infinite width limit.
\paragraph{Proposition 2} Suppose networks $f^{(t)}(x)$, $f_{al\cdot 0}^{(t)}(x)$ and $f_{al\cdot ada}^{(t)}(x)$ are trained with gradient descent, Align-zero and Align-ada respectively and share the same initialization. Assume layers are randomly initialized according to the neural tangent initialization~\cite{jacot2018NTK}. Also, assume that the nonlinearity $\sigma(\cdot)$ has bounded first and second derivatives. Suppose all methods use learning rate $\eta$ on the mean-squared error loss for $T$ time such that $\max_x ||y(x)-f^{(t)}(x)||_2$ is uniformly bounded in $t \in [0, T]$. Then, in the simultaneous limit $\lim_{m_{N-1}, m_{N-2}, ... m_1 \to \infty}$:
\begin{equation}
    ||f^{(t)}(x) - f_{al\cdot 0}^{(t)}(x)||,\,\, ||f^{(t)}(x) - f_{al\cdot ada}^{(t)}(x)||  \in \mathcal{O}(\frac{1}{\sqrt{\min \{ m_1, ... m_{N-1} \} }})
\end{equation}
\begin{proof}
First, note that $f^{(t)}(x)$ evolves as:
\begin{equation}
    \dot{f}^{(t)}(x) = \eta \nabla_\theta f^{(t)}(x)^T  \mathbb{E}[\nabla_\theta f^{(t)}(x) (y(x) - f^{(t)}(x))]
\end{equation}
where $\theta$ are the parameters of $f^{(t)}$. The term in expectation corresponds to the gradient of the total loss (over all points) with respect to the network parameters $\theta$, while $\nabla_\theta f^{(t)}(x)^T$ indicates how parameter changes affect the value of $f^{(t)}$ evaluated at point $x$.

We consider a general alternative learning procedure that trains a network $f_*^{(t)}$ which replaces the feedback Jacobian $\nabla_\theta f^{(t)}(x)$ with a general function $J_*^{(t)}(x)$. $f_*^{(t)}$ evolves as:
\begin{equation}
    \dot{f}_*^{(t)}(x) = \eta \nabla_\theta f^{(t)}_*(x)^T  \mathbb{E}[J_*^{(t)}(x) (y(x) - f_*^{(t)}(x))]
\end{equation}
Note that here, parameter changes, corresponding to the term in expectation, depend on $J_*^{(t)}(x)$ and not on the true feedback Jacobian $\nabla_\theta f^{(t)}_*(x)$. Thus, in general, the above rule avoids exact backpropagation. However, parameter changes affect the network output according to the true feedback Jacobian $\nabla_\theta f^{(t)}_*(x)$.

Define $\Delta_1^{(t)}(x) = \nabla_\theta f^{(t)}_*(x) - \nabla_\theta f^{(t)}(x)$ and $\Delta_2^{(t)}(x) = J_*^{(t)}(x) - \nabla_\theta f^{(t)}(x)$. Next, finding the difference of $\dot{f}^{(t)}(x)$ and $\dot{f}_*^{(t)}(x)$:
\begin{multline}
    \dot{f}_*^{(t)}(x)-\dot{f}^{(t)}(x) = \eta \nabla_\theta f^{(t)}_*(x)^T  \mathbb{E}[J_*^{(t)}(x) (y(x) - f_*^{(t)}(x))] - \eta \nabla_\theta f^{(t)}(x)^T  \mathbb{E}[\nabla_\theta f^{(t)}(x) (y(x) - f^{(t)}(x))]
    \\  = \eta \nabla_\theta f^{(t)}_*(x)^T  \mathbb{E}[J_*^{(t)}(x) (y(x) - f^{(t)}(x))] - \eta \nabla_\theta f^{(t)}(x)^T  \mathbb{E}[\nabla_\theta f^{(t)}(x) (y(x) - f^{(t)}(x))] \\ + \eta \nabla_\theta f^{(t)}_*(x)^T \mathbb{E}[J_*^{(t)}(x) (f^{(t)}(x) - f_*^{(t)}(x))]
    \\  = \eta \nabla_\theta f^{(t)}(x)^T  \mathbb{E}[J_*^{(t)}(x) (y(x) - f^{(t)}(x))] - \eta \nabla_\theta f^{(t)}(x)^T  \mathbb{E}[\nabla_\theta f^{(t)}(x) (y(x) - f^{(t)}(x))] \\ + \eta \Delta_1^{(t)}(x)^T  \mathbb{E}[J_*^{(t)}(x) (y(x) - f^{(t)}(x))] + \eta \nabla_\theta f^{(t)}_*(x)^T \mathbb{E}[J_*^{(t)}(x) (f^{(t)}(x) - f_*^{(t)}(x))]
    \\  = \eta \Delta_1^{(t)}(x)^T  \mathbb{E}[(\nabla_\theta f^{(t)}(x) + \Delta_2^{(t)}(x)) (y(x) - f^{(t)}(x))] + \eta \nabla_\theta f^{(t)}(x)^T  \mathbb{E}[ \Delta_2^{(t)}(x) (y(x) - f^{(t)}(x))]  \\+ \eta  (\nabla_\theta f^{(t)}(x)  + \Delta_1^{(t)}(x))^T  \mathbb{E}[(\nabla_\theta f^{(t)}(x) + \Delta_2^{(t)}(x)) (f^{(t)}(x) - f_*^{(t)}(x))]
\end{multline}
Bounding the derivative of the discrepancy between ${f}^{(t)}(x)$ and ${f}_*^{(t)}(x)$:
\begin{multline}
    \frac{d}{dt} ||{f}_*^{(t)}(x)-{f}^{(t)}(x)||_2 \leq ||\dot{f}_*^{(t)}(x)-\dot{f}^{(t)}(x)||_2 
    \\  \leq \eta \max_x ||\Delta_1^{(t)}(x)||_{op}  (\max_x ||\nabla_\theta f^{(t)}(x)||_{op} + \max_x  ||\Delta_2^{(t)}(x)||_{op}) \max_x ||y(x) - f^{(t)}(x)||_2 \\ + \eta \max_x ||\nabla_\theta f^{(t)}(x)||_{op}  \max_x ||\Delta_2^{(t)}(x)||_{op} \max_x ||y(x) - f^{(t)}(x)||_2  \\+ \eta  (\max_x ||\nabla_\theta f^{(t)}(x)||_{op}  + \max_x ||\Delta_1^{(t)}(x)||_{op})  (\max_x ||\nabla_\theta f^{(t)}(x)||_{op} +  \max_x ||\Delta_2^{(t)}(x)||_{op}) \max_x ||f^{(t)}(x) - f_*^{(t)}(x)||_2
\end{multline}
Taking the integral of both sides:
\begin{multline}
    ||{f}_*^{(t)}(x)-{f}^{(t)}(x)||_2 
    \\  \leq \int_0^t \eta \max_x ||\Delta_1^{(\tau)}(x)||_{op}  (\max_x ||\nabla_\theta f^{(\tau)}(x)||_{op} + \max_x  ||\Delta_2^{(\tau)}(x)||_{op}) \max_x ||y(x) - f^{(\tau)}(x)||_2 d \tau \\ + \int_0^t \eta \max_x ||\nabla_\theta f^{(\tau)}(x)||_{op}  \max_x ||\Delta_2^{(\tau)}(x)||_{op} \max_x ||y(x) - f^{(\tau)}(x)||_2 d \tau \\+ \eta  (\max_x ||\nabla_\theta f^{(t)}(x)||_{op}  + \max_x ||\Delta_1^{(t)}(x)||_{op})  (\max_x ||\nabla_\theta f^{(t)}(x)||_{op} +  \max_x ||\Delta_2^{(t)}(x)||_{op}) \max_x ||f^{(t)}(x) - f_*^{(t)}(x)||_2
\end{multline}
Define 
\begin{multline}
\alpha(t) = \int_0^t \eta \max_x ||\Delta_1^{(\tau)}(x)||_{op}  (\max_x ||\nabla_\theta f^{(\tau)}(x)||_{op} + \max_x  ||\Delta_2^{(\tau)}(x)||_{op}) \max_x ||y(x) - f^{(\tau)}(x)||_2 d \tau \\ + \int_0^t \eta \max_x ||\nabla_\theta f^{(\tau)}(x)||_{op}  \max_x ||\Delta_2^{(\tau)}(x)||_{op} \max_x ||y(x) - f^{(\tau)}(x)||_2 d \tau
\end{multline}
\begin{equation}
    \beta(t) = \eta  (\max_x ||\nabla_\theta f^{(t)}(x)||_{op}  + \max_x ||\Delta_1^{(t)}(x)||_{op})  (\max_x ||\nabla_\theta f^{(t)}(x)||_{op} +  \max_x ||\Delta_2^{(t)}(x)||_{op})
\end{equation}
\begin{equation}
    u(t) = \max_x ||{f}_*^{(t)}(x)-{f}^{(t)}(x)||_2
\end{equation}
Then, the above inequality implies:
\begin{equation}
    u(t) \leq \alpha(t) + \int_0^t \beta(\tau) u(\tau) d\tau
\end{equation}
By Grönwall's inequality:
\begin{equation}
    u(t) \leq \alpha(t) \exp{\left(\int_0^t \beta(\tau) d\tau\right)}
\end{equation}
Finally, observe that if in the simultaneous limit $\lim_{m_{N-1}, m_{N-2}, ... m_1 \to \infty}$, $\max_x ||\Delta_1^{(t)}(x)||_{op}$ and $\max_x ||\Delta_2^{(t)}(x)||_{op}$ converge uniformly to $0$ in range $[0,T]$, then, in the same limit, $\alpha(t)$ converges to $0$ at the same rate implying $u(t)$ converges to $0$ at the same rate.

Finally, observe that by Lemma 1, $\max_x ||\Delta_1^{(t)}(x)||_{op}$ and $\max_x ||\Delta_2^{(t)}(x)||_{op}$ converge to $0$ at rate $\mathcal{O}(\frac{1}{\sqrt{\min \{ m_1, ... m_{N-1} \} }})$. Thus, $u(t)$ converges to $0$ at rate $\mathcal{O}(\frac{1}{\sqrt{\min \{ m_1, ... m_{N-1} \} }})$, yielding the result:
\begin{equation}
    ||f^{(t)}(x) - f_{al\cdot 0}^{(t)}(x)||,\,\, ||f^{(t)}(x) - f_{al\cdot ada}^{(t)}(x)||  \in \mathcal{O}(\frac{1}{\sqrt{\min \{ m_1, ... m_{N-1} \} }})
\end{equation}
\end{proof}
\subsection{Pseudocode for Align-zero and Align-ada} \label{app: pseudocode}
\begin{algorithm}[H]
\begin{algorithmic}
\REQUIRE Training points $\mathcal{X}$, training steps $T$, layer widths $m_1, m_2, ... m_L$, learning rate $\eta$
\STATE Initialize weights $W^{(0)}_1, b^{(0)}_1, ... W^{(0)}_N, b^{(0)}_N$

\FOR{Mini-batch $x \in \mathcal{X}$} 
    \FOR{$l = 1, ... N$}
        \STATE $z_{l}^{(0)}(x)  = \frac{1}{\sqrt{m_{l-1}}} W^{(0)}_l \sigma(z_{l-1}^{(0)}(x)) + b_l^{(0)}$
    \ENDFOR
    \STATE $g_N(x) = I$ // Finding $\nabla_{z_l} f^{(0)}(x)$
    
    \FOR{$l = N-1, ... 1$}
        \STATE $g_l(x) = diag(\sigma'(z_{l}^{(0)}(x))) \frac{1}{\sqrt{m_{l}}} W_{l+1}^T g_{l+1}(x)$
    \ENDFOR
\ENDFOR

\FOR{$t = 1, ... T$}
    \STATE Sample mini-batch $x \in \mathcal{X}$
    
    \FOR{$l = 1, ... N$}
        \STATE $z_{l}^{(t-1)}(x)  = \frac{1}{\sqrt{m_{l-1}}} W^{(t-1)}_l \sigma(z_{l-1}^{(t-1)}(x)) + b_l^{(t-1)}$
    \ENDFOR
    
    \STATE \hl{${W}_{l}^{(t)} =  W_l^{(t-1)} + \frac{\eta}{\sqrt{m_{l-1}}} g_l(x) \nabla_{z_N(x)} L(z_N^{(t-1)}(x), y(x)) \sigma(z_{l-1}^{(0)}(x))^T$}
    
    \STATE ${b}_{l}^{(t)} =  b_l^{(t-1)} + \eta g_l(x) \nabla_{z_N(x)} L(z_N^{(t-1)}(x), y(x))$
\ENDFOR
\STATE Return $W^{(T)}_1, b^{(T)}_1, ... W^{(T)}_N, b^{(T)}_N$
\end{algorithmic}
 \caption{Align-zero training}
 \label{algo: linearized}
\end{algorithm}

\begin{algorithm}[H]
\begin{algorithmic}
\REQUIRE Training points $\mathcal{X}$, training steps $T$, layer widths $m_1, m_2, ... m_L$, learning rate $\eta$
\STATE Initialize weights $W^{(0)}_1, b^{(0)}_1, ... W^{(0)}_N, b^{(0)}_N$

\FOR{Mini-batch $x \in \mathcal{X}$} 
    \FOR{$l = 1, ... N$}
        \STATE $z_{l}^{(0)}(x)  = \frac{1}{\sqrt{m_{l-1}}} W^{(0)}_l \sigma(z_{l-1}^{(0)}(x)) + b_l^{(0)}$
    \ENDFOR
    \STATE $g_N(x) = I$ // Finding $\nabla_{z_l} f^{(0)}(x)$
    
    \FOR{$l = N-1, ... 1$}
        \STATE $g_l(x) = diag(\sigma'(z_{l}^{(0)}(x))) \frac{1}{\sqrt{m_{l}}} W_{l+1}^T g_{l+1}(x)$
    \ENDFOR
\ENDFOR

\FOR{$t = 1, ... T$}
    \STATE Sample mini-batch $x \in \mathcal{X}$
    
    \FOR{$l = 1, ... N$}
        \STATE $z_{l}^{(t-1)}(x)  = \frac{1}{\sqrt{m_{l-1}}} W^{(t-1)}_l \sigma(z_{l-1}^{(t-1)}(x)) + b_l^{(t-1)}$
    \ENDFOR
    
    \STATE \hl{${W}_{l}^{(t)} =  W_l^{(t-1)} + \frac{\eta}{\sqrt{m_{l-1}}} g_l(x) \nabla_{z_N(x)} L(z_N^{(t-1)}(x), y(x)) \sigma(z_{l-1}^{(t-1)}(x))^T$}
    
    \STATE ${b}_{l}^{(t)} =  b_l^{(t-1)} + \eta g_l(x) \nabla_{z_N(x)} L(z_N^{(t-1)}(x), y(x))$
\ENDFOR
\STATE Return $W^{(T)}_1, b^{(T)}_1, ... W^{(T)}_N, b^{(T)}_N$
\end{algorithmic}
 \caption{Align-ada training}
 \label{algo: semilinearized}
\end{algorithm}

\subsection{Remark on difference between Align-zero and~\cite{lee2019WideNN}} \label{remark} Our Align-zero learning rule differs subtly but importantly from the linearized networks of~\cite{lee2019WideNN}. Both formulations use the learning dynamics of Equation~\eqref{lin_align_rule} for their parameters -- which is linear in the parameters. However, the two formulations apply Equation~\eqref{lin_align_rule} as a function of their respective activation dynamics $f^{(t)}_{al\cdot 0}(x), {f}^{(t)}_{lin}(x)$. While our activation dynamics $f^{(t)}_{al\cdot 0}(x)$ are a nonlinear neural network with parameters ${W}_{l, al\cdot 0}^{(t)}$ and ${b}_{l, al\cdot 0}^{(t)}$, the activation dynamics ${f}^{(t)}_{lin}(x)$ of~\cite{lee2019WideNN} are themselves linear and \textit{not a neural network}. Rather, ${f}^{(t)}_{lin}(x)$ is the first order term in the Taylor expansion of the neural network ${f}^{(t)}_{al\cdot 0}(x)$:
\begin{equation}
{f}_{al\cdot 0}^{(t)}(x) =  f^{(0)}(x) + \nabla_\theta f^{(0)}(x)^T  (\theta^{(t)} - \theta^{(0)})  + O((\theta^{(t)} - \theta^{(0)})^2)\\ = {f}_{lin}^{(t)}(x) + O((\theta^{(t)} - \theta^{(0)})^2)
\end{equation}
where $\theta^{(t)}$ represents the parameters of both functions. While the two types of linearization behave similarly near initialization, our approach linearizes only the parameter dynamics, while the approach of~\cite{lee2019WideNN} also linearizes the network's activation dynamics. We focus on only linearizing the parameter dynamics because our goal is to study learning rules for actual (non-linearized) neural networks.

\subsection{Implementation of Align-zero and Align-ada in recurrent neural networks} \label{app: rnn_impl}
Although throughout this paper we use the notation of feedforward neural networks, note that our analysis applies to recurrent neural networks as well since RNNs produce feedforward networks when unrolled over time. For example, in the notation of feedforward networks, $z_l(x)$ would refer to the pre-activated hidden state of the network at time step $l$ of recurrent processing. However, there are typically three key differences with the feedforward case: 1) losses $L(\hat{y}_l(x), y_l(x))$ can be accumulated over different time steps where $\hat{y}_l(x)$ is the predicted label depending on state $z_l(x)$, 2) inputs can be fed in at different time steps, 3) that recurrent weights and biases are tied at all time-steps (i.e. $W = W_1 = W_2= ...; b = b_1 = b_2 =...$), although these weights can change over the course of training. Thus, parameter updates must be summed over all losses and all recurrent time-steps where the parameters are applied. The resulting Align-ada learning rule for the recurrent weights of an RNN is:
\begin{equation} \label{slin_rule_rnn}
    \dot{W}_{al\cdot ada}^{(t)} =   \frac{\eta}{\sqrt{m}} \times \\ 
    \sum_{i, j: j > i} \mathbb{E}_{p_x}[\nabla_{z_i} \hat{y}_j^{(0)}(x) \nabla_{\hat{y}_j} L(\hat{y}_j^{(t)}(x), y_j(x)) \sigma(z_{i-1}^{(t)}(x))^T ]
\end{equation}
The learning rules for other parameters  (i.e. biases, readout-weights, etc.) can be found similarly, and the adaptation of the Align-zero learning rule can be found analogously.

\subsection{Implementation details} \label{app: hyperparams}
All experiments are run on an Nvidia Tesla K20Xm GPU. Unless otherwise mentioned, all networks are trained under the NTK parameterization as defined in Equation~\ref{eqn: ntk_param}; see Appendix~\ref{app: std_param} for experiments under standard parameterization. For alignment measurements, we compute alignment scores using Equation~\eqref{eqn: align_score}, estimating expectations with 100 Gaussian samples. For all randomness, the random seed is set to $99$.

\paragraph{Baselines}
We primarily compare Align-ada and Align-zero against biologically-motivated learning algorithms, although, for reference, we also compare against gradient descent (denoted Normal). Among biologically-motivated algorithms, we compare with training only the last fully connected layer of the network, leaving intermediate layer parameters fixed, which provides a useful baseline to check that other methods are optimizing intermediate layer parameters (denoted Last layer). We also compare with feedback alignment (denoted FA) \cite{lillicrap2016random} which uses fixed random feedback weights; and with direct feedback alignment (denoted DFA) \cite{nokland2019training}, which replaces the $x$-dependent feedback Jacobian $\nabla_{z_l} f^{(0)}(x)$ of Align-zero with a constant random matrix.

\paragraph{CIFAR-10 experiments}
We train on CNN architecture with 7 convolutional layers followed by global average pooling and a fully connected layer. All convolutional layers use 3-by-3 convolutions with the same number of filters, which is varied from 8 to 512. All but the 4th and 6th convolutional layers have stride 1-by-1, with the other two using stride 2-by-2. All convolution layers are followed by batch normalization and a ReLU non-linearity, unless otherwise mentioned. Networks are trained on a mean squared error loss. Following~\cite{novak2019bayesian}, labels are constructed as one hot encoded labels minus $0.1$ so that they have mean $0$. Images are scaled in range $[0,1]$. For each training method, test set accuracies are evaluated at each training epoch, and the best results reported. A learning rate grid search in $\{1, 2, 5\}$ is performed for all training methods and the best result reported unless unless otherwise mentioned. See our experiments varying learning rate for observations on training instability beyond a learning rate of $5$. We use a batch size of 100, and all methods are trained for 150 epochs unless otherwise mentioned. Best results are reported over all epochs trained unless otherwise reported. For experiments on Small CIFAR-10, we randomly sample a subset of points from the full CIFAR-10 training set to use as training points.

\paragraph{KMNIST experiments}
We train on CNN architecture with 3 convolutional layers followed by global average pooling and a fully connected layer. All convolutional layers use 3-by-3 convolutions with the same number of filters, which is varied from 8 to 512. The first layer uses stride 1-by-1, with the other two using stride 2-by-2. All convolution layers are followed by batch normalization and a ReLU non-linearity. Networks are trained on a mean squared error loss. Following~\cite{novak2019bayesian}, labels are constructed as one hot encoded labels minus $0.1$ so that they have mean $0$. Images are scaled in range $[0,1]$. For each training method, test set accuracies are evaluated at each training epoch, and the best results reported. A learning rate grid search in $\{1, 2, 5\}$ is performed for all training methods and the best result reported unless unless otherwise mentioned. See our experiments varying learning rate for observations on training instability beyond a learning rate of $5$. We use a batch size of 100, and all methods are trained for 100 epochs unless otherwise mentioned. Best results are reported over all epochs trained unless otherwise reported.

\paragraph{Add task experiments}
The task has a sequence of iid Bernoulli inputs $x(t)$ and desired labels $y(t)$ constructed as:
\begin{equation}
    y(t) = 0.5 + 0.5 x(t) * \delta(t-2) - 0.25 x(t) * \delta(t-5) 
\end{equation}
where $*$ denotes convolution. The dataset consists of $400$ labeled input sequences of length $100$, split into a training dataset of $300$ sequences and a test dataset of $100$ sequences. Given parameters $W_h, b_h, W_i, W_o, b_o$, the recurrent neural network to predict $y(t)$ is given by:
\begin{equation}
    z_{k+1}(x) = \frac{1}{\sqrt{m}} W_h \sigma(z_k(x)) + b_h + W_i x_k
\end{equation}
\begin{equation}
    \hat{y}_{k}(x) = W_o \sigma(z_k(x)) + b_o
\end{equation}
where $k$ refers to the current recurrent time step, $m$ is the size of the hidden state $z_k(x)$, and $x_k$ and  $\hat{y}_{k}(x)$ refer to the current time input and predicted output respectively. We vary the number of hidden units in the recurrent state in a range from $8$ to $4096$. We train all methods with a learning rate of $0.001$ and batch size $50$ for $200$ epochs on the mean squared error loss.

\paragraph{ImageNet experiments}
We train on CNN architecture with 7 convolutional layers followed by global average pooling and a fully connected layer. All convolutional layers use 3-by-3 convolutions with 512 filters. The first convolutional layer has stride 4-by-4, followed alternating convolutional layers of stride 2-by-2 and 1-by-1. All convolution layers are followed by batch normalization and a ReLU non-linearity. Due to the high memory cost of storing backward projection weights for DFA on ImageNet, we apply the same set of backward projection weights at each spatial location for all layers. Networks are trained on a mean squared error loss with one-hot-encoded labels. We train on ILSVRC-12 images~\cite{russakovsky2015Imagenet} using the following pre-processing steps: we select random crops of the original images with scales in range $[0.08, 1.0]$ and aspect ratio in range $[0.75, 1.34]$; the images are then re-scaled to 224-by-224 pixels. Next, we randomly flip images horizontally and apply a channel-wise normalization. We use a batch size of 100, and train networks using a learning rate of $5$ for 150000 steps.

\subsection{Additional Align variant: Align-prop} \label{app:alignprop}
In this section, we consider an additional variant of Align called Align-prop. Align-prop is equivalent to a variant of Feedback Alignment~\cite{lillicrap2016random} in which the fixed feedback weights are set equal to the feedforward weights at initialization. Although Align-prop is similar to Align-ada, it has the following key difference: Align-ada uses a fixed feedback Jacobian $\nabla_{z_l} f^{(0)}(x)$ to estimate gradients at each layer while Align-prop's approximation of the feedback Jacobian is not fixed. This is because in Align-prop's approximation of the true feedback Jacobian $\nabla_{z_l} f^{(t)}(x)$, although the feedback weights are fixed, the derivatives of activations in intermediate layers change over the course of training. Thus, Align-prop must propagate backwards through each feedback weights at each training step while Align-ada can completely avoid such propagation by storing $\nabla_{z_l} f^{(0)}(x)$.

Furthermore, we note that Align-prop satisfies the conditions required of Align-ada and Align-zero in Proposition 2. This is because Align-prop's approximation of the true feedback Jacobian $\nabla_{z_l} f^{(t)}(x)$ changes infinitesimally from initialization in infinite width networks since activations in infinite width networks change infinitesimally over the course of training. Thus, Align-prop approaches gradient descent in the limit of infinite width networks.

We conduct experiments comparing Align-prop to other baselines on 8 layer CNNs trained on CIFAR-10; we leave a more extensive empirical study of Align-prop as a future work. We experiment on wide networks with width $\times 128$, $\times 256$ and $\times 512$. Otherwise, we use the same architecture and hyperparameter settings used in our main experiments on CIFAR-10 (see Appendix~\ref{app: hyperparams}).

As observed in Table~\ref{tab: scaling_accs_pool_graddescent_alignprop}, Align-prop outperforms all biologically-motivated baselines including other Align variants. We suspect that this is because Align-prop is allowed to adapt its approximation of the layerwise feedback Jacobian over the course of training: the activation patterns in the backward pass of Align-prop match those used during the forward pass.

\begin{table*}[!h]
  \centering
      \caption{Test set accuracies of 8-layer convolutional neural networks with different layer widths on the CIFAR-10 dataset with different methods of training. The scale of the architecture is denoted $\times n$, where $n$ is the number of filters in intermediate layers. For each scale, the best performing biologically-motivated method is in \textbf{bold}.}
    \adjustbox{max width=\textwidth}{
    \begin{tabular}{lrrr}
    \toprule
    Scale & \multicolumn{1}{l}{x128} & \multicolumn{1}{l}{x256} & \multicolumn{1}{l}{x512} \\
    \midrule
    Align-prop & \textbf{59.2}\% & \textbf{59.2}\% & \textbf{59.6}\% \\
    Align-ada  & {54.2}\% & {56.6}\% & {58.0}\% \\
    Align-zero & 40.0\% & 47.6\% & 51.4\% \\
    DFA~\cite{nokland2016direct}    & {54.9}\% & 54.5\% & 54.1\% \\
    FA~\cite{lillicrap2016random}  & 45.7\% & 45.6\% & 45.4\% \\
    Last layer & 34.8\% & 40.2\% & 43.4\% \\
    \textcolor{gray}{Normal} & \textcolor{gray}{64.2\%} & \textcolor{gray}{63.2\%} & \textcolor{gray}{62.3\%} \\
    \bottomrule
    \end{tabular}%
    }

  \label{tab: scaling_accs_pool_graddescent_alignprop}%
\end{table*}

\subsection{Standard parameterization experiments} \label{app: std_param}
In this section, we train networks with Align-ada and Align-zero under standard parameterization. We note that our theoretical analysis and algorithmic contributions are primarily applicable to networks trained in the NTK training regime; nevertheless, we consider to what extent our proposed learning rules can extend to standard parameterization.

Recall that in Equation~\ref{eqn: ntk_param}, the network is defined under NTK parameterization as:
\begin{equation}
    z_{l}(x)  = \frac{1}{\sqrt{m_{l-1}}} W_l \sigma(z_{l-1}(x)) + b_l
\end{equation}
where $m_{l-1}$ is the dimensionality of $z_{l-1}(x)$. Under \textit{standard parameterization}, we omit the width dependent normalization in the definition of the network:
\begin{equation}
    z_{l}(x)  = W_l \sigma(z_{l-1}(x)) + b_l
\end{equation}
However, under standard parameterization, weight initializations are appropriately scaled such that the distribution of networks are the same as under NTK parameterization \textit{at initialization}. During training, weights and biases evolve differently under the two parameterizations. See~\cite{jacot2018NTK} for further discussion of the difference between NTK and standard parameterization.

We train CNNs with 7 convolutional layers under the standard parameterization on CIFAR-10. We follow the training procedure detailed in Appendix~\ref{app: hyperparams} with the following differences: we perform an extensive grid search over learning rates in $\{0.0005, 0.001, 0.002, 0.005, 0.01, 0.02, 0.05, 0.1, 0.2, 0.5 \}$, and we use Adam~\cite{kingma2015adam} to optimize network parameters. To better understand the differences in training behavior between standard parameterization and NTK parameterization, we also consider two additional settings: 1) training only for 1 epoch, 2) training only for 1 epoch with a small learning rate of $0.0005$.

As observed in Appendix~\ref{app: tables} Table~\ref{tab:std_param_cifar}, Normal achieves $>85 \%$ test set accuracy on the widest networks tested. Align-ada incurs a performance gap with Normal of $20 -30 \%$, although its absolute performance in this setting is higher than under the NTK regime (see Appendix~\ref{app: tables} Table~\ref{tab: scaling_accs_pool_graddescent} for tabulated results in the NTK regime). Align-zero is unstable in this training setting. These experiments validate that the Align methods' comparable performance to gradient descent relies on the NTK regime, as expected by our theoretical analysis.

We also observe that as in settings more closely matching the NTK regime, the gap between Align-ada and gradient descent decreases. In particular, we find more comparable performance when limiting training to 1 epoch. This is practically relevant in settings with a limited computational budget available for training: in these settings, Align-ada performs closer to the maximum performance expected (i.e. by training with gradient descent). We also observe that when training with a small learning rate, the test set accuracy gap between Align-ada and Normal decreases, particularly for narrower networks. This is reasonable since in the NTK training regime, parameters move in a small neighborhood around their initialization points~\cite{jacot2018NTK}, which can be qualitatively matched under standard parameterization by restricting training time and learning rate.

\subsection{Seed robustness experiments} \label{app:multiseed}
To verify the statistical significance of our results, we conduct multiple trials of one of our experiments under multiple random seeds. Specifically, we train CNNs with 7 convolutional layers at a width scale of $\times 256$ on CIFAR-10 using 5 new random seeds. The architecture and hyperparameters are chosen to be consistent with our main experiments; in particular, we train for 150 epochs.

As observed in Table~\ref{tab:multiseed}, the trends of the results over multiple random seeds are consistent with what we observe in our main experiments (see Table~\ref{tab: scaling_accs_pool_graddescent}). Moreover, the standard deviations of the test set accuracies are less than $1 \%$ for each method. This gives us confidence in the statistical significance of our results.

\begin{table}[h!]
\centering
\caption{Test set accuracies of 8-layer convolutional neural with $256$ filters per intermediate layer on the CIFAR-10 dataset with different methods of training. Means and standard deviations are reported over $5$ random trials with different random seeds.}
\begin{tabular}{llllll}
\toprule
Method   & Align-ada    & Align-zero  & DFA                            & FA                             & \textcolor{gray}{Normal}                         \\ 
\midrule
Accuracy & $56.4 \pm 0.4 \%$ & $46.9 \pm 0.5 \%$ & $55.3 \pm 0.7 \%$ & $46.1 \pm 0.5 \%$ & \textcolor{gray}{$63.1 \pm 0.4 \%$} \\
\bottomrule
\end{tabular}
\label{tab:multiseed}
\end{table}

\subsection{Training time of Align methods} \label{app:train_time}
In this section, we evaluate the training time of Align methods compared to backpropagation. We highlight that this is done to assess the possibility of using Align methods as a learning rule for practical applications rather than to evaluate the biological plausibility of Align methods. Furthermore, we note that while the implementation of backprop has been extensively optimized due to its popularity, Align methods have not be optimized to run efficiently on standard computing hardware.

We compare training times of Align-ada and Align-zero with backprop. We evaluate on CNNs with 7 convolutional layers at a variety of architectural scales. The architecture is chosen to be consistent with our main experiments. For each training method and architecture scale, we measure training times over 15 trials.

As observed in Figure~\ref{fig:train_time}, we find that Align methods achieve generally comparable training times to backprop, with training times of all methods scaling similarity with network width. Align methods appear to be nearly a fixed factor slower than backprop, with Align-zero faster than Align-ada. This is because our implementation of Align methods stores additional information about the network state at initialization in order to approximate gradients.

\begin{figure}
    \centering
    \includegraphics[width=0.75\textwidth]{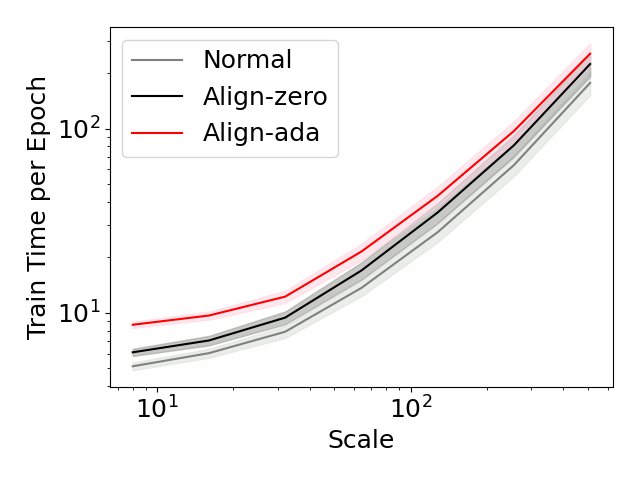}
    \caption{Per epoch training times of different learning rules vs. network scale of CNNs trained on CIFAR-10. Margins indicate standard errors over 15 trials.}
    \label{fig:train_time}
\end{figure}

\subsection{Results on Add task} \label{app: add_task}
\paragraph{Dataset and baselines}
We train standard ReLU-activated RNNs on the Add task used as a benchmark for biologically-plausible RNN training rules in ~\cite{marschall2020unified}. Additional details on the dataset, architectures and hyperparameters are included in Appendix~\ref{app: hyperparams}. We compare Align-ada and Align-zero to Normal and Readout-only, which trains only readout weights and fixes all other weights.
\paragraph{Varying network width}
In Appendix~\ref{app: tables} Figure~\ref{fig:width_tuning_rnn}, we plot the training and test set losses of different learning rules. Although we find a large gap between Normal and alignment-based methods at small width, at larger widths, the gap decreases with Align methods performing comparably to Normal at a network scale of $512$ hidden units. At large widths, the performance of Align methods far exceed Readout-only training, indicating that tuning the hidden layer representation is useful even at large width when random representations provide enough information to adequately perform the task. See Appendix~\ref{app: tables} for full tabulated results.

\clearpage
\subsection{Additional tables and figures}
\label{app: tables}

\begin{figure}[!h]
    \centering
    \begin{subfigure}{.49\textwidth}
      \centering
      \includegraphics[width=\linewidth]{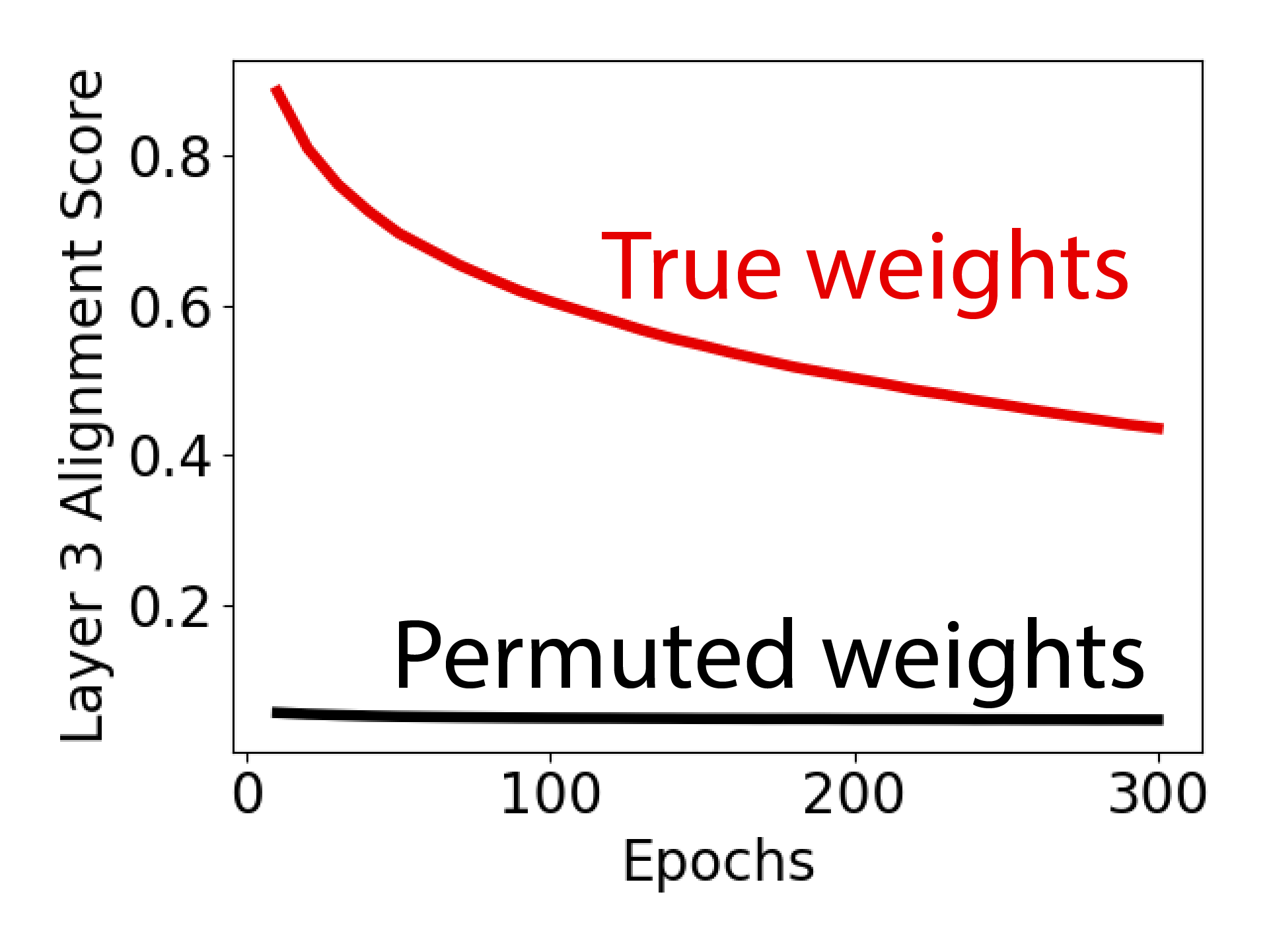}
      \caption{CIFAR-10}
      \label{fig:align_epoch_cifar}
    \end{subfigure}%
    \begin{subfigure}{.49\textwidth}
      \centering
      \includegraphics[width=\linewidth]{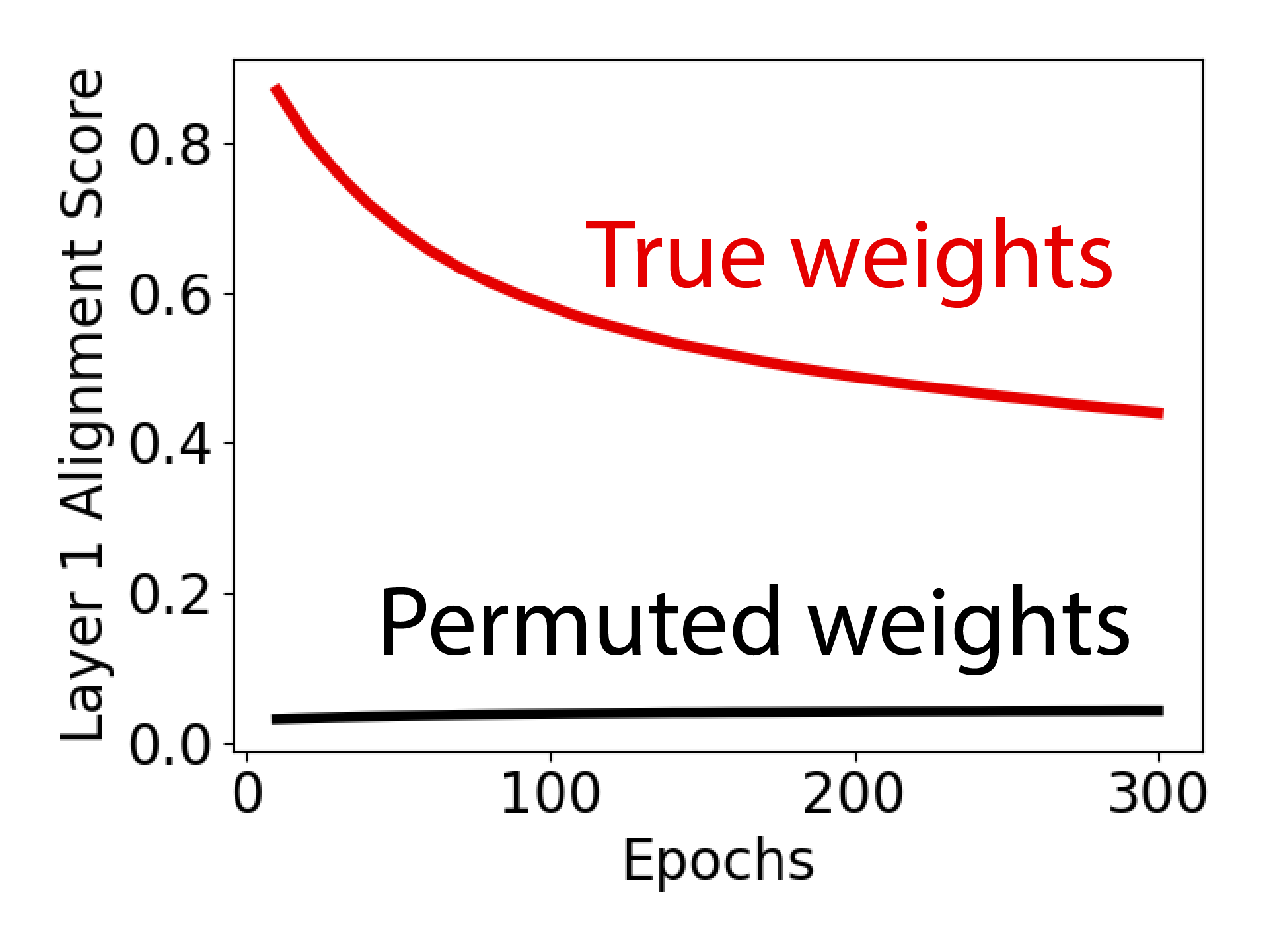}
      \caption{KMNIST}
      \label{fig:align_epoch_kmnist}
    \end{subfigure}
    
    \caption{Alignment scores of convolutional neural networks over the course of training on the CIFAR-10 and KMNIST. As a baseline, alignment scores in networks with randomly permuted weights are also plotted. Training is conducted for 300 epochs to measure the evolution of alignment scores over a longer time-scale.}
    \label{fig:align_epoch}
\end{figure}

\begin{table*}[!h]
  \centering
        \caption{Alignment scores of 8-layer ReLU-activated convolutional neural networks with different layer widths trained on the CIFAR-10 dataset. The network is trained for 150 epochs at a learning rate of 1. The scale of the architecture is denoted $\times n$, where $n$ is the number of filters in intermediate layers. Alignment scores are shown for all intermediate layers of each network after training. Non-monotonicity with layer width is due to statistical fluctuations.}
    \begin{tabular}{lrrrrrrr}
    \toprule
    Alignment & \multicolumn{1}{l}{x8} & \multicolumn{1}{l}{x16} & \multicolumn{1}{l}{x32} & \multicolumn{1}{l}{x64} & \multicolumn{1}{l}{x128} & \multicolumn{1}{l}{x256} & \multicolumn{1}{l}{x512} \\
    \midrule
    Layer 1 & 0.430 & 0.465 & 0.527 & 0.502 & 0.651 & 0.720 & 0.749 \\
    Layer 2 & 0.490 & 0.433 & 0.409 & 0.409 & 0.656 & 0.658 & 0.760 \\
    Layer 3 & 0.501 & 0.467 & 0.446 & 0.515 & 0.586 & 0.669 & 0.766 \\
    Layer 4 & 0.372 & 0.297 & 0.350 & 0.371 & 0.497 & 0.703 & 0.767 \\
    Layer 5 & 0.499 & 0.333 & 0.438 & 0.499 & 0.546 & 0.650 & 0.711 \\
    Layer 6 & 0.301 & 0.363 & 0.329 & 0.321 & 0.419 & 0.500 & 0.603 \\
    Layer 7 & 0.476 & 0.382 & 0.432 & 0.405 & 0.333 & 0.398 & 0.450 \\
    \bottomrule
    \end{tabular}%

  \label{tab: alignment_metrics_cifar}%
\end{table*}%

\begin{table*}[!h]
  \centering
        \caption{Alignment scores of 4-layer convolutional neural networks with different layer widths trained on the KMNIST dataset. The scale of the architecture is denoted $\times n$, where $n$ is the number of filters in intermediate layers. Alignment scores are shown for all intermediate layers of each network after training. Non-monotonicity with layer width is due to statistical fluctuations.}
    \begin{tabular}{lrrrrrrr}
    \toprule
    Alignment & \multicolumn{1}{l}{x8} & \multicolumn{1}{l}{x16} & \multicolumn{1}{l}{x32} & \multicolumn{1}{l}{x64} & \multicolumn{1}{l}{x128} & \multicolumn{1}{l}{x256} & \multicolumn{1}{l}{x512} \\
    \midrule
    Layer 1 & 0.456 & 0.585 & 0.579 & 0.646 & 0.743 & 0.721 & 0.800 \\
    Layer 2 & 0.137 & 0.179 & 0.210 & 0.299 & 0.381 & 0.482 & 0.656 \\
    Layer 3 & 0.165 & 0.207 & 0.232 & 0.253 & 0.376 & 0.471 & 0.581 \\
    \bottomrule
    \end{tabular}%

  \label{tab: alignment_metrics_kmnist}%
\end{table*}%

\begin{table*}[!h]
  \centering
        \caption{Alignment scores of 8-layer Tanh-activated convolutional neural networks with different layer widths trained on the CIFAR-10 dataset. The network is trained for 100 epochs at a learning rate of 2. The scale of the architecture is denoted $\times n$, where $n$ is the number of filters in intermediate layers. Alignment scores are shown for all intermediate layers of each network after training. Non-monotonicity with layer width is due to statistical fluctuations.}
    \begin{tabular}{lrrrrrrr}
    \toprule
    Alignment & \multicolumn{1}{l}{x8} & \multicolumn{1}{l}{x16} & \multicolumn{1}{l}{x32} & \multicolumn{1}{l}{x64} & \multicolumn{1}{l}{x128} & \multicolumn{1}{l}{x256} & \multicolumn{1}{l}{x512} \\
    \midrule
    Layer 1 & 0.459 & 0.605 & 0.681 & 0.816 & 0.829 & 0.925 & 0.935 \\
    Layer 2 & 0.600 & 0.513 & 0.565 & 0.825 & 0.785 & 0.912 & 0.945 \\
    Layer 3 & 0.660 & 0.539 & 0.480 & 0.695 & 0.811 & 0.905 & 0.932 \\
    Layer 4 & 0.470 & 0.324 & 0.504 & 0.741 & 0.788 & 0.887 & 0.919 \\
    Layer 5 & 0.621 & 0.606 & 0.608 & 0.800 & 0.779 & 0.849 & 0.910 \\
    Layer 6 & 0.478 & 0.547 & 0.537 & 0.684 & 0.698 & 0.819 & 0.871 \\
    Layer 7 & 0.439 & 0.675 & 0.530 & 0.587 & 0.595 & 0.759 & 0.827 \\
    \bottomrule
    \end{tabular}%

  \label{tab: alignment_metrics_cifar_tanh}%
\end{table*}%

\begin{table*}[!h]
  \centering
      \caption{Test set accuracies of 8-layer convolutional neural networks with different layer widths on the CIFAR-10 dataset with different methods of training. The scale of the architecture is denoted $\times n$, where $n$ is the number of filters in intermediate layers. For each scale, the best performing biologically-motivated method is in \textbf{bold}.}
    \adjustbox{max width=\textwidth}{
    \begin{tabular}{lrrrrrrr}
    \toprule
    Scale & \multicolumn{1}{l}{x8} & \multicolumn{1}{l}{x16} & \multicolumn{1}{l}{x32} & \multicolumn{1}{l}{x64} & \multicolumn{1}{l}{x128} & \multicolumn{1}{l}{x256} & \multicolumn{1}{l}{x512} \\
    \midrule
    Align-ada & {33.7}\% & {35.7}\% & {42.8}\% & \textbf{49.9}\% & {54.2}\% & \textbf{56.6}\% & \textbf{58.0}\% \\
    Align-zero & 20.9\% & 22.0\% & 30.1\% & 32.6\% & 40.0\% & 47.6\% & 51.4\% \\
    DFA~\cite{nokland2016direct}   & 28.5\% & 34.8\% & 43.0\% & \textbf{49.9}\% & \textbf{54.9}\% & 54.5\% & 54.1\% \\
    FA~\cite{lillicrap2016random}    & \textbf{36.1}\% & \textbf{40.3}\% & \textbf{43.1}\% & 46.5\% & 45.7\% & 45.6\% & 45.4\% \\
    Last layer & 20.7\% & 22.5\% & 28.2\% & 31.1\% & 34.8\% & 40.2\% & 43.4\% \\
    \textcolor{gray}{Normal} & \textcolor{gray}{54.8\%} & \textcolor{gray}{61.0\%} & \textcolor{gray}{64.2\%} & \textcolor{gray}{65.3\%} & \textcolor{gray}{64.2\%} & \textcolor{gray}{63.2\%} & \textcolor{gray}{62.3\%} \\
    \bottomrule
    \end{tabular}%
    }

  \label{tab: scaling_accs_pool_graddescent}%
\end{table*}

\begin{table*}[!h]
  \centering
      \caption{Test set accuracies of 4-layer convolutional neural networks with different layer widths on the KMNIST dataset with different methods of training. The scale of the architecture is denoted $\times n$, where $n$ is the number of filters in intermediate layers. For each scale, the best performing biologically-motivated method is in \textbf{bold}.}
    \adjustbox{max width=\textwidth}{
    \begin{tabular}{lrrrrrrr}
    \toprule
    Scale & \multicolumn{1}{l}{x8} & \multicolumn{1}{l}{x16} & \multicolumn{1}{l}{x32} & \multicolumn{1}{l}{x64} & \multicolumn{1}{l}{x128} & \multicolumn{1}{l}{x256} & \multicolumn{1}{l}{x512} \\
    \midrule
    {Align-ada} & {47.5}\% & {58.8}\% & \textbf{70.7}\% & \textbf{71.2}\% & \textbf{76.1}\% & \textbf{77.8}\% & \textbf{78.4}\% \\
    Align-zero & 24.5\% & 29.9\% & 31.2\% & 47.2\% & 50.3\% & 55.2\% & 61.2\% \\
    DFA~\cite{nokland2016direct}   & 37.6\% & 49.4\% & 61.4\% & 70.7\% & 75.8\% & 77.3\% & 77.9\% \\
    FA~\cite{lillicrap2016random}    & \textbf{52.4\%} & \textbf{66.3\%} & 68.6\% & 69.6\% & 72.1\% & 72.8\% & 63.5\% \\
    Last layer & 25.9\% & 35.6\% & 42.2\% & 51.8\% & 58.7\% & 65.6\% & 68.2\% \\
   \textcolor{gray}{Normal} & \textcolor{gray}{72.2\%} & \textcolor{gray}{83.4\%} & \textcolor{gray}{86.2\%} & \textcolor{gray}{87.2\%} & \textcolor{gray}{87.5\%} & \textcolor{gray}{87.4\%} & \textcolor{gray}{87.2\%} \\
    \bottomrule
    \end{tabular}%
    }

  \label{tab: scaling_accs_pool_graddescent_kmnist}%
\end{table*}

\begin{figure}
    \centering
    \includegraphics[width=0.7\textwidth]{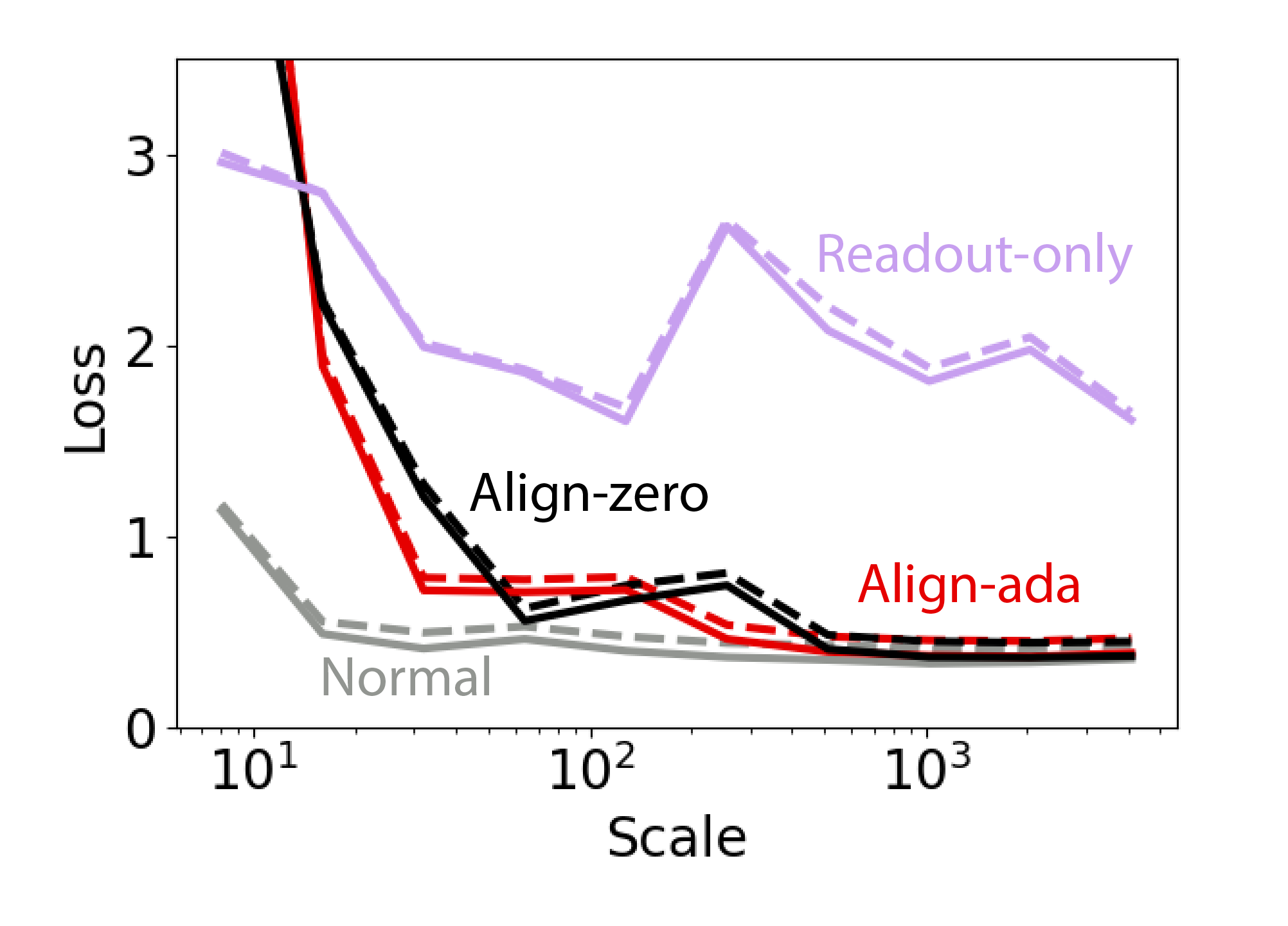}
    \caption{Loss of different learning rules (Normal, Align-ada, Align-zero, Readout-only) as a function of architecture scale on a RNN. Solid lines: test loss; dashed: train loss.}
    \label{fig:width_tuning_rnn}
\end{figure}

\begin{table*}[!h]
  \centering
      \caption{Training and test loss for RNNs trained with normal, Align-ada,  Align-zero and readout-only training on the Add task. Training losses are higher than test set losses due to the test set being an easier task: random chance loss is 9.359 on the training set and 9.312 on the test set.}
     \adjustbox{max width=\textwidth}{
    \begin{tabular}{lrrrrrrrrrr}
    \toprule
    Scale & \multicolumn{1}{l}{x8} & \multicolumn{1}{l}{x16} & \multicolumn{1}{l}{x32} & \multicolumn{1}{l}{x64} & \multicolumn{1}{l}{x128} & \multicolumn{1}{l}{x256} & \multicolumn{1}{l}{x512} & \multicolumn{1}{l}{x1024} & \multicolumn{1}{l}{x2048} & \multicolumn{1}{l}{x4096} \\
    \midrule
    Align-ada, Train loss & 6.652 & 1.944 & 0.782 & 0.772 & 0.786 & 0.535 & 0.474 & 0.455 & 0.450 & 0.466 \\
    Align-ada, Test loss & 6.795 & 1.890 & 0.716 & 0.707 & 0.718 & 0.461 & 0.394 & 0.374 & 0.369 & 0.388 \\
    Align-zero, Train loss & 5.228 & 2.235 & 1.272 & 0.621 & 0.741 & 0.807 & 0.482 & 0.448 & 0.442 & 0.447 \\
    Align-zero, Test loss & 5.227 & 2.211 & 1.205 & 0.556 & 0.663 & 0.742 & 0.408 & 0.367 & 0.364 & 0.372 \\
    Readout-only, Train loss & 3.008 & 2.798 & 2.011 & 1.872 & 1.674 & 2.655 & 2.202 & 1.881 & 2.042 & 1.647 \\
    Readout-only, Test loss & 2.957 & 2.794 & 1.990 & 1.855 & 1.601 & 2.623 & 2.077 & 1.811 & 1.976 & 1.608 \\
    \textcolor{gray}{Normal, Train loss} & \textcolor{gray}{1.168} & \textcolor{gray}{0.553} & \textcolor{gray}{0.495} & \textcolor{gray}{0.527} & \textcolor{gray}{0.474} & \textcolor{gray}{0.440} & \textcolor{gray}{0.429} & \textcolor{gray}{0.408} & \textcolor{gray}{0.412} & \textcolor{gray}{0.429} \\
    \textcolor{gray}{Normal, Test loss} & \textcolor{gray}{1.133} & \textcolor{gray}{0.488} & \textcolor{gray}{0.411} & \textcolor{gray}{0.461} & \textcolor{gray}{0.399} & \textcolor{gray}{0.366} & \textcolor{gray}{0.352} & \textcolor{gray}{0.333} & \textcolor{gray}{0.339} & \textcolor{gray}{0.357} \\
    \bottomrule
    \end{tabular}%
    }

  \label{tab:rnn}%
\end{table*}%

\begin{table*}[!h]
  \centering
      \caption{Test set accuracies of 8-layer convolutional neural networks with different layer widths on the CIFAR-10 dataset with different methods of training (Align-ada, Align-zero, FA~\cite{lillicrap2016random}, DFA~\cite{nokland2016direct}, Normal). Results are reported when trained on different numbers of randomly sampled CIFAR-10 training points. The scale of the architecture is denoted $\times n$, where $n$ is the number of filters in intermediate layers. For each scale and dataset size, the best performing biologically-motivated method is in \textbf{bold}.}
    \adjustbox{max width=\textwidth}{
    \begin{tabular}{rlrrrrrrr}
    \toprule
    \multicolumn{1}{l}{Training data} & Method & \multicolumn{1}{l}{x8} & \multicolumn{1}{l}{x16} & \multicolumn{1}{l}{x32} & \multicolumn{1}{l}{x64} & \multicolumn{1}{l}{x128} & \multicolumn{1}{l}{x256} & \multicolumn{1}{l}{x512} \\
    \midrule
    45    & Align-ada & \textbf{28.9\%} & \textbf{31.1\%} & \textbf{28.9\%} & \textbf{31.1\%} & \textbf{31.1\%} & \textbf{31.1\%} & \textbf{28.9\%} \\
    45    & Align-zero & 24.4\% & 17.8\% & 22.2\% & 24.4\% & 28.9\% & 24.4\% & 20.0\% \\
    45    & FA    & 24.4\% & 24.4\% & 24.4\% & 26.7\% & 26.7\% & 26.7\% & 17.8\% \\
    45    & DFA   & 17.8\% & 28.9\% & \textbf{28.9\%} & 28.9\% & 28.9\% & 28.9\% & 20.0\% \\
    \textcolor{gray}{45}    & \textcolor{gray}{Normal} & \textcolor{gray}{24.4\%} & \textcolor{gray}{17.8\%} & \textcolor{gray}{22.2\%} & \textcolor{gray}{20.0\%} & \textcolor{gray}{22.2\%} & \textcolor{gray}{24.4\%} & \textcolor{gray}{26.7\%} \\
    \midrule
    150   & Align-ada & {20.7\%} & 22.7\% & 24.0\% & \textbf{27.3\%} & \textbf{30.0\%} & \textbf{30.0\%} & 27.3\% \\
    150   & Align-zero & \textbf{21.3\%} & 20.0\% & \textbf{24.7\%} & 23.3\% & 29.3\% & 27.3\% & \textbf{28.0\%} \\
    150   & FA    & 20.0\% & \textbf{23.3\%} & 23.3\% & 25.3\% & 25.3\% & 25.3\% & 22.0\% \\
    150   & DFA   & {20.7\%} & 20.7\% & 21.3\% & 21.3\% & 21.3\% & 21.3\% & 22.7\% \\
    \textcolor{gray}{150}   & \textcolor{gray}{Normal} & \textcolor{gray}{22.0\%} & \textcolor{gray}{24.0\%} & \textcolor{gray}{24.7\%} & \textcolor{gray}{24.7\%} & \textcolor{gray}{29.3\%} & \textcolor{gray}{28.7\%} & \textcolor{gray}{28.7\%} \\
    \midrule
    450   & Align-ada & 25.1\% & 25.3\% & \textbf{28.2\%} & \textbf{30.4\%} & \textbf{33.3\%} & \textbf{33.3\%} & \textbf{35.8\%} \\
    450   & Align-zero & 22.0\% & 22.0\% & 26.4\% & 26.0\% & 29.1\% & 32.0\% & 32.4\% \\
    450   & FA    & 25.6\% & 25.6\% & 25.6\% & 25.6\% & 25.6\% & 25.6\% & 24.7\% \\
    450   & DFA   & \textbf{26.7\%} & \textbf{26.7\%} & 26.7\% & 28.0\% & 28.0\% & 28.0\% & 25.1\% \\
    \textcolor{gray}{450}   & \textcolor{gray}{Normal} & \textcolor{gray}{28.7\%} & \textcolor{gray}{29.6\%} & \textcolor{gray}{32.0\%} & \textcolor{gray}{31.3\%} & \textcolor{gray}{32.7\%} & \textcolor{gray}{33.1\%} & \textcolor{gray}{34.2\%} \\
    \midrule
    1500  & Align-ada & 26.8\% & \textbf{29.9\%} & \textbf{31.3\%} & \textbf{35.5\%} & \textbf{36.1\%} & \textbf{40.2\%} & \textbf{39.1\%} \\
    1500  & Align-zero & 20.7\% & 23.1\% & 28.4\% & 31.2\% & 33.2\% & 37.0\% & 37.3\% \\
    1500  & FA    & \textbf{29.1\%} & 29.1\% & 29.1\% & 29.1\% & 29.1\% & 29.1\% & 25.8\% \\
    1500  & DFA   & 24.9\% & 28.8\% & 30.1\% & 30.1\% & 30.1\% & 30.1\% & 28.6\% \\
    \textcolor{gray}{1500}  & \textcolor{gray}{Normal} & \textcolor{gray}{32.9\%} & \textcolor{gray}{33.1\%} & \textcolor{gray}{36.5\%} & \textcolor{gray}{39.5\%} & \textcolor{gray}{39.3\%} & \textcolor{gray}{40.3\%} & \textcolor{gray}{41.0\%} \\
    \midrule
    4500  & Align-ada & \textbf{30.3\%} & \textbf{33.1\%} & \textbf{34.9\%} & \textbf{40.0\%} & \textbf{42.3\%} & \textbf{45.2\%} & \textbf{46.2\%} \\
    4500  & Align-zero & 22.3\% & 23.1\% & 30.1\% & 31.8\% & 36.4\% & 41.9\% & 42.6\% \\
    4500  & FA    & 28.9\% & 31.6\% & 33.5\% & 35.0\% & 35.0\% & 35.0\% & 33.0\% \\
    4500  & DFA   & 26.4\% & 30.2\% & 34.4\% & 39.0\% & 40.4\% & 40.4\% & 36.0\% \\
    \textcolor{gray}{4500}  & \textcolor{gray}{Normal} & \textcolor{gray}{39.3\%} & \textcolor{gray}{43.8\%} & \textcolor{gray}{45.5\%} & \textcolor{gray}{48.4\%} & \textcolor{gray}{47.9\%} & \textcolor{gray}{49.2\%} & \textcolor{gray}{49.2\%} \\
    \midrule
    15000 & Align-ada & 32.5\% & 35.2\% & 40.6\% & 46.1\% & 49.8\% & \textbf{51.8\%} & \textbf{53.3\%} \\
    15000 & Align-zero & 21.7\% & 23.1\% & 30.4\% & 31.8\% & 38.9\% & 46.2\% & 48.5\% \\
    15000 & FA    & \textbf{33.9\%} & \textbf{38.9\%} & 40.4\% & 44.2\% & 44.2\% & 44.2\% & 41.9\% \\
    15000 & DFA   & 26.8\% & 33.2\% & \textbf{40.7\%} & \textbf{47.1\%} & \textbf{51.0\%} & 51.0\% & 50.0\% \\
    \textcolor{gray}{15000} & \textcolor{gray}{Normal} & \textcolor{gray}{50.1\%} & \textcolor{gray}{54.4\%} & \textcolor{gray}{58.9\%} & \textcolor{gray}{58.1\%} & \textcolor{gray}{57.3\%} & \textcolor{gray}{57.2\%} & \textcolor{gray}{57.4\%} \\
    \midrule
    45000 & Align-ada & 33.7\% & 35.7\% & 42.8\% & \textbf{49.9\%} & 54.2\% & \textbf{56.6\%} & \textbf{58.0\%} \\
    45000 & Align-zero & 20.9\% & 22.0\% & 30.1\% & 32.6\% & 40.0\% & 47.6\% & 51.4\% \\
    45000 & FA    & \textbf{36.1\%} & \textbf{40.3\%} & \textbf{43.1\%} & 46.5\% & 45.7\% & 45.6\% & 45.4\% \\
    45000 & DFA   & 28.5\% & 34.8\% & 43.0\% & \textbf{49.9\%} & \textbf{54.9\%} & 54.5\% & 54.1\% \\
    \textcolor{gray}{45000} & \textcolor{gray}{Normal} & \textcolor{gray}{54.8\%} & \textcolor{gray}{61.0\%} & \textcolor{gray}{64.2\%} & \textcolor{gray}{65.3\%} & \textcolor{gray}{64.2\%} & \textcolor{gray}{63.2\%} & \textcolor{gray}{62.3\%} \\
    \bottomrule
    \end{tabular}%
    }

  \label{tab: scaling_accs_pool_graddescent_lowdata}%
\end{table*}

\begin{table}[htbp]
  \centering
  \caption{Test set accuracies of 8-layer convolutional neural networks with different layer widths on the CIFAR-10 dataset with different methods of training. Networks are trained under standard parameterization using Adam optimization~\cite{kingma2015adam}. The scale of the architecture is denoted $\times n$, where $n$ is the number of filters in intermediate layers. Accuracies are reported when trained with 150 epochs, 1 epoch, and 1 epoch with a small learning rate.}
  \label{tab:std_param_cifar}%
    \begin{tabular}{lrrrrrrr}
    \toprule
    Scale & \multicolumn{1}{l}{x8} & \multicolumn{1}{l}{x16} & \multicolumn{1}{l}{x32} & \multicolumn{1}{l}{x64} & \multicolumn{1}{l}{x128} & \multicolumn{1}{l}{x256} & \multicolumn{1}{l}{x512} \\
    \midrule
    \multicolumn{8}{c}{Standard  parameterization, 150 epochs} \\
    \midrule
    Align-ada & 34.7\% & 41.1\% & 47.4\% & 54.4\% & 58.6\% & 61.7\% & 64.9\% \\
    Align-zero & 17.3\% & 16.2\% & 15.4\% & 13.8\% & 14.0\% & 14.3\% & 12.6\% \\
    \textcolor{gray}{Normal} & \textcolor{gray}{60.9\%} & \textcolor{gray}{73.3\%} & \textcolor{gray}{79.1\%} & \textcolor{gray}{82.8\%} & \textcolor{gray}{84.9\%} & \textcolor{gray}{86.5\%} & \textcolor{gray}{87.6\%} \\
    \midrule
    \multicolumn{8}{c}{Standard parameterization, 1 epoch} \\
    \midrule
    Align-ada & 26.1\% & 27.7\% & 35.4\% & 38.1\% & 40.7\% & 41.4\% & 38.0\% \\
    Align-zero & 14.3\% & 11.0\% & 14.1\% & 12.3\% & 12.1\% & 11.2\% & 12.0\% \\
    \textcolor{gray}{Normal} & \textcolor{gray}{37.3\%} & \textcolor{gray}{43.8\%} & \textcolor{gray}{53.4\%} & \textcolor{gray}{58.7\%} & \textcolor{gray}{59.2\%} & \textcolor{gray}{58.5\%} & \textcolor{gray}{46.2\%} \\
    \midrule
    \multicolumn{8}{c}{Standard parameterization, 1 epoch + small learning rate} \\
    \midrule
    Align-ada & 13.0\% & 24.0\% & 35.4\% & 37.7\% & 40.7\% & 41.4\% & 37.9\% \\
    Align-zero & 8.9\% & 11.0\% & 14.1\% & 10.9\% & 12.1\% & 11.2\% & 11.5\% \\
    \textcolor{gray}{Normal} & \textcolor{gray}{17.9\%} & \textcolor{gray}{32.1\%} & \textcolor{gray}{46.4\%} & \textcolor{gray}{55.2\%} & \textcolor{gray}{57.7\%} & \textcolor{gray}{58.5\%} & \textcolor{gray}{46.2\%} \\
    \bottomrule
    \end{tabular}%
\end{table}%

\end{document}